\documentclass{article}


\PassOptionsToPackage{numbers,sort}{natbib}


\usepackage[preprint]{neurips_2023}



\usepackage[utf8]{inputenc} 
\usepackage[T1]{fontenc}    
\usepackage{hyperref}       
\usepackage{url}            
\usepackage{booktabs}       
\usepackage{amsfonts}       
\usepackage{nicefrac}       
\usepackage{microtype}      
\usepackage{xcolor}         

\usepackage{amsfonts}
\usepackage{amsmath}
\usepackage{amssymb}
\usepackage{amsthm}
\usepackage{mathtools}
\usepackage{xcolor}


\DeclarePairedDelimiter\bracks{[}{]}

\DeclarePairedDelimiter\angles{\langle}{\rangle}
\DeclarePairedDelimiter\abs{\lvert}{\rvert}
\DeclarePairedDelimiter\norm{\lVert}{\rVert}
\DeclarePairedDelimiter\ceil{\lceil}{\rceil}

\newcommand\mbb[1]{\mathbb{#1}}
\newcommand\mbf[1]{\mathbf{#1}}
\newcommand\mcal[1]{\mathcal{#1}}
\newcommand\mrm[1]{\mathrm{#1}}


\newcommand\expval[1]{\underset{#1}{\mbb{E}}\,}
\newcommand\prob[1]{\underset{#1}{\Pr}\,}

\theoremstyle{plain}
\newtheorem{definition}{Definition}[section]
\newtheorem{remark}[definition]{Remark}
\newtheorem{theorem}[definition]{Theorem}

\newtheorem{proposition}[definition]{Proposition}


\let\green\undefined
\newcommand{\green}[1]{\textbf{\color{green}#1}}

\let\mus\undefined
\newcommand{\mus}{MuS}

\usepackage{algorithm}
\usepackage{algorithmic}
\usepackage{subcaption}


\title{Stability Guarantees for Feature Attributions with Multiplicative Smoothing}

%

\author{%
  Anton Xue \quad Rajeev Alur \quad Eric Wong \\
  Department of Computer and Information Science \\
  University of Pennsylvania \\
  Philadelphia, PA 19104 \\
  \texttt{\{antonxue,alur,exwong\}@seas.upenn.edu}
}

\begin{document}

\maketitle

\begin{abstract}

Explanation methods for machine learning models tend not to provide any formal guarantees and may not reflect the underlying decision-making process.
In this work, we analyze stability as a property for reliable feature attribution methods. 
We prove that relaxed variants of stability are guaranteed if the model is sufficiently Lipschitz with respect to the masking of features. 
We develop a smoothing method called Multiplicative Smoothing (MuS) to achieve such a model.
We show that MuS overcomes the theoretical limitations of standard smoothing techniques and can be integrated with any classifier and feature attribution method.
We evaluate MuS on vision and language models with various feature attribution methods, such as LIME and SHAP, and demonstrate that MuS endows feature attributions with non-trivial stability guarantees.

\end{abstract}

\section{Introduction}
\label{sec:introduction}

Modern machine learning models are incredibly powerful at challenging prediction tasks but notoriously black-box in their decision-making.
One can therefore achieve impressive performance without fully understanding \emph{why}.
In settings like medical diagnosis~\cite{reyes2020interpretability,tjoa2020survey} and legal analysis~\cite{wachter2017counterfactual,bibal2021legal}, where accurate and well-justified decisions are important, such power without proof is insufficient.
In order to fully wield the power of such models while ensuring reliability and trust, a user needs accurate and insightful \emph{explanations} of model behavior.

One popular family of explanation methods is \emph{feature attributions}~\cite{simonyan2013deep,ribeiro2016should,lundberg2017unified,sundararajan2017axiomatic}.
Given a model and input, a feature attribution method generates a score for each input feature that denotes its importance to the overall prediction.
For instance, consider Figure~\ref{fig:non-stable}, in which the Vision Transformer~\cite{dosovitskiy2020image} classifier predicts the full image (left) as ``Goldfish''.
We then use a feature attribution method like SHAP~\cite{lundberg2017unified} to score each feature and select the top-25\%, for which the masked image (middle) is consistently predicted as ``Goldfish''.
However, adding a single patch of features (right) alters the prediction confidence so much that it now yields ``Axolotl''.
This suggests that the explanation is brittle~\cite{ghorbani2019interpretation}, as small changes easily cause it to induce some other class.
In this paper, we study how to overcome such behavior by analyzing the \emph{stability} of an explanation: we consider an explanation to be stable if, once the explanatory features are included, the addition of more features does not change the prediction.

\begin{figure}[t]
\centering

\begin{minipage}{0.30\textwidth}
    \centering
    \includegraphics[width=3.00cm]{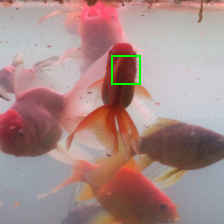}
    \\
    Goldfish (98.72\%) \\
    Cockroach (0.0218\%)
\end{minipage}%
\begin{minipage}{0.30\textwidth}
    \centering
    \includegraphics[width=3.00cm]{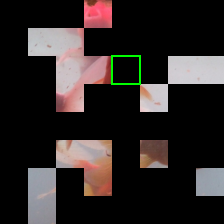}
    \\
    Goldfish (95.86\%) \\
    Axolotl (1.62\%)
\end{minipage}%
\begin{minipage}{0.30\textwidth}
    \centering
    \includegraphics[width=3.00cm]{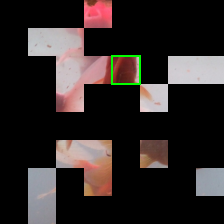}
    \\
    Axolotl (55.14\%) \\
    Goldfish (30.24\%)
\end{minipage}%

\vspace{1em}

\begin{minipage}{0.30\textwidth}
    \centering
    \includegraphics[width=3.00cm]{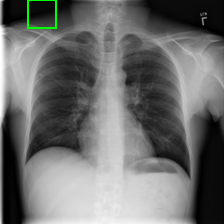}
    \\
    Pneumonia (25.50\%)
\end{minipage}%
\begin{minipage}{0.30\textwidth}
    \centering
    \includegraphics[width=3.00cm]{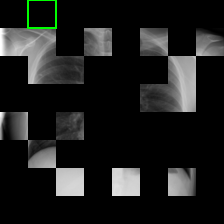}
    \\
    Pneumonia (29.56\%)
\end{minipage}%
\begin{minipage}{0.30\textwidth}
    \centering
    \includegraphics[width=3.00cm]{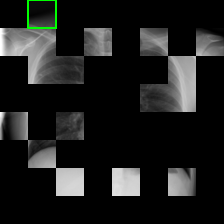}
    \\
    Pneumonia (50.05\%)
\end{minipage}

\caption{
(Top) Classification by Vision Transformer~\cite{dosovitskiy2020image}.
(Bottom) Pneumonia detection from an X-ray image by DenseNet-Res224 from TorchXRayVision~\cite{Cohen2022xrv}.
Both are \(224 \times 224\) pixel images whose attributions are derived from SHAP~\cite{lundberg2017unified} with top-25\% selection.
A single \(28 \times 28\) pixel patch of difference between the two attributions (marked \green{green}) significantly affects prediction confidence.
}
\label{fig:non-stable}
\end{figure}

Stability implies that the selected features are enough to explain the prediction~\cite{brown2009new,chen2018learning,li2020survey} and that this selection maintains strong explanatory power even in the presence of additional information~\cite{ghorbani2019interpretation,boopathy2020proper}.
Similar properties are studied in literature and identified as useful for interpretability~\cite{nauta2022anecdotal}, and we emphasize that our main focus is on analyzing and achieving provable guarantees.
Stability guarantees, in particular, are useful as they allow one to predict how model behavior varies with the explanation.
Given a stable explanation, one can include more features, e.g., adding context, while maintaining confidence in the consistency of the underlying explanatory power.
Crucially, we observe that such guarantees only make sense when jointly considering the model and explanation method: the explanation method necessarily depends on the model to yield an explanation, and stability is then evaluated with respect to the model.

Thus far, existing works on feature attributions with formal guarantees face computational tractability and explanatory utility challenges.
While some methods take an axiomatic approach \cite{shapley1953value,sundararajan2017axiomatic}, others use metrics that appear reasonable but may not reliably reflect useful model behavior, a common and known limitation~\cite{zhou2022feature}.
Such explanations have been criticized as a plausible guess at best, and completely misleading~\cite{jacovi2020towards} at worst.

In this paper, we study how to construct explainable models with provable stability guarantees.
We jointly consider the classification model and explanation method and present a formalization for studying such properties that we call \emph{explainable models}.
We focus on \emph{binary feature attributions}~\cite{li2017feature} wherein each feature is either marked as explanatory (1) or not explanatory (0).
We present a method to solve this problem, inspired by techniques from adversarial robustness, particularly randomized smoothing~\cite{cohen2019certified,yang2020randomized}.
Our method can take \emph{any} off-the-shelf classifier and feature attribution method to efficiently yield an explainable model that satisfies provable stability guarantees.
In summary, our contributions are as follows:

\begin{itemize}

    \item We formalize stability as a key property for binary feature attributions and study this in the framework of explainable models.
    We prove that relaxed variants of stability are guaranteed if the model is sufficiently Lipschitz with respect to the masking of features.

    \item To achieve the sufficient Lipschitz condition, we develop a smoothing method called Multiplicative Smoothing (\mus{}).
    We show that \mus{} achieves strong smoothness conditions, overcomes key theoretical and practical limitations of standard smoothing techniques, and can be integrated with any classifier and feature attribution method.

    \item We evaluate \mus{} on vision and language models along with different feature attribution methods.
    We demonstrate that \mus{}-smoothed explainable models achieve strong stability guarantees at a small cost to accuracy.

\end{itemize}

\section{Overview}
\label{sec:overview}

We observe that formal guarantees for explanations must take into account both the model and explanation method, and for this, we present in Section~\ref{sec:explainable-models} a pairing that we call \emph{explainable models}.
This formulation allows us to describe the desired stability properties in Section~\ref{sec:stability-properties}.
We show in Section~\ref{sec:lipschitz-for-stability} that a classifier with sufficient Lipschitz smoothness with respect to feature masking allows us to yield provable guarantees of stability.
Finally, in Section~\ref{sec:adapting-methods}, we show how to adapt existing feature attribution methods into our explainable model framework.

\subsection{Explainable Models}
\label{sec:explainable-models}

We first present explainable models as a formalism for rigorously studying explanations.
Let \(\mcal{X} = \mbb{R}^n\) be the space of inputs, a classifier \(f : \mcal{X} \to [0,1]^m\) maps inputs \(x \in \mcal{X}\) to \(m\) class probabilities that sum to \(1\), where the class of \(f(x) \in [0,1]^m\) is taken to be the largest coordinate.
Similarly, an explanation method \(\varphi : \mcal{X} \to \{0,1\}^n\) maps an input \(x \in \mcal{X}\) to an explanation \(\varphi(x) \in \{0,1\}^n\) that indicates which features are considered explanatory for the prediction \(f(x)\).
In particular, we may pick and adapt \(\varphi\) from among a selection of existing feature attribution methods like LIME~\cite{ribeiro2016should}, SHAP~\cite{lundberg2017unified}, and many others \cite{simonyan2013deep,sundararajan2017axiomatic,smilkov2017smoothgrad,sundararajan2020many,kwon2022weightedshap}, wherein \(\varphi\) may be thought of as a top-\(k\) feature selector.
Note that the selection of input features necessarily depends on the explanation method executing or analyzing the model, and so it makes sense to jointly study the model and explanation method: given a classifier \(f\) and explanation method \(\varphi\), we call the pairing \(\angles{f, \varphi}\) an \emph{explainable model}.
Given some \(x \in \mcal{X}\), the explainable model \(\angles{f, \varphi}\) maps \(x\) to both a prediction  and explanation.
We show this in Figure~\ref{fig:explainable-model}, where \(\angles{f, \varphi}(x) \in [0,1]^m \times \{0,1\}^n\) pairs the class probabilities and the feature attribution.

\begin{figure}[h]
\centering

\includegraphics[width=0.95\textwidth]{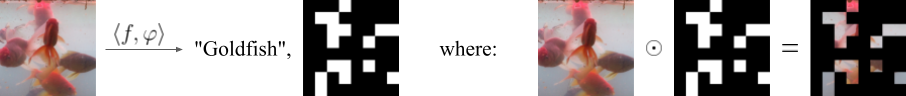}

\caption{
An explainable model \(\angles{f, \varphi}\) outputs both a classification and a feature attribution.
The feature attribution is a binary-valued mask (white 1, black 0) that can be applied over the original input.
Here \(f\) is Vision Transformer~\cite{dosovitskiy2020image} and \(\varphi\) is SHAP~\cite{lundberg2017unified} with top-25\% feature selection.
}
\label{fig:explainable-model}
\end{figure}

For an input \(x \in \mcal{X}\), we will evaluate the quality of the binary feature attribution \(\varphi(x)\) through its masking on \(x\).
That is, we will study the behavior of \(f\) on the masked input \(x \odot \varphi(x) \in \mcal{X}\), where \(\odot\) is the element-wise vector product.
To do this, we define a notion of \emph{prediction equivalence}: for two \(x, x' \in \mcal{X}\), we write \(f(x) \cong f(x')\) to mean that \(f(x)\) and \(f(x')\) yield the same class.
This allows us to formalize the intuition that an explanation \(\varphi(x)\) should recover the prediction of \(x\) under \(f\).

\begin{definition}
The explainable model \(\angles{f, \varphi}\) is consistent at \(x\) if \(f(x) \cong f(x \odot \varphi(x))\).
\end{definition}

Evaluating \(f\) on \(x \odot \varphi(x)\) this way lets us apply the model as-is and therefore avoids the challenge of constructing a surrogate model that is accurate to the original~\cite{alizadeh2020managing}.
Moreover, this approach is reasonable, especially in domains like vision --- where one intuitively expects that a masked image retaining only the important features should induce the intended prediction.
Indeed, architectures like Vision Transformer~\cite{dosovitskiy2020image} can maintain high accuracy with only a fraction of the image present~\cite{salman2022certified}.

Particularly, we would like for \(\angles{f, \varphi}\) to generate explanations that are stable and concise (i.e. sparse).
The former is our central guarantee and is ensured through smoothing.
The latter implies that \(\varphi(x)\) has few ones entries, and is a desirable property since a good explanation should not contain too much redundant information.
However, sparsity is a more difficult property to enforce,
as this is contingent on the model having high accuracy with respect to heavily masked inputs.
For sparsity, we present a simple heuristic in Section~\ref{sec:adapting-methods} and evaluate its effectiveness in Section~\ref{sec:experiments}.

\subsection{Stability Properties of Explainable Models}
\label{sec:stability-properties}

Given an explainable model \(\angles{f, \varphi}\) and some \(x \in \mcal{X}\), stability means that the prediction does not change even if one adds more explanatory features to \(\varphi(x)\).
For instance, the model-explanation pair in Figure~\ref{fig:non-stable} is \emph{not} stable, as the inclusion of a single feature group (patch) changes the prediction.
To formalize this notion of stability, we first introduce a partial ordering: for \(\alpha, \alpha' \in \{0,1\}^n\), we write \(\alpha \succeq \alpha'\) iff \(\alpha_i \geq \alpha_i '\) for all \(i = 1, \ldots, n\).
That is, \(\alpha \succeq \alpha'\) iff \(\alpha\) includes all the features selected by \(\alpha'\).

\begin{definition} \label{def:stable}
The explainable model \(\angles{f, \varphi}\) is stable at \(x\) if \(f(x \odot \alpha) \cong f(x \odot \varphi(x))\) for all \(\alpha \succeq \varphi(x)\).
\end{definition}

Note that the constant explanation \(\varphi(x) = \mbf{1}\), the vector of ones, makes \(\angles{f, \varphi}\) trivially stable at every \(x \in \mcal{X}\), though this is not a concise explanation.
Additionally, stability at \(x\) implies consistency at \(x\).

Unfortunately, stability is a difficult property to enforce in general, as it requires that \(f\) satisfy a monotone-like behavior with respect to feature inclusion --- which is especially challenging for complex models like neural networks.
Checking stability without additional assumptions on \(f\) is also hard: if \(k = \norm{\varphi(x)}_1\) is the number of ones in \(\varphi(x)\), then there are \(2^{n-k}\) possible \(\alpha \succeq \varphi(x)\) to check.
This large space of possible \(\alpha \succeq \varphi(x)\) motivates us to examine instead \emph{relaxations} of stability.
We introduce lower and upper relaxations of stability below.

\begin{definition} \label{def:inc-stable}
The explainable model \(\angles{f, \varphi}\) is incrementally stable at \(x\) with radius \(r\) if \(f(x \odot \alpha) \cong f(x \odot \varphi(x))\) for all \(\alpha \succeq \varphi(x)\) where \(\norm{\alpha - \varphi(x)}_1 \leq r\).
\end{definition}

Incremental stability is the lower relaxation since it considers the case where the mask \(\alpha\) has only a few features more than \(\varphi(x)\).
For instance, if one can probably add up to \(r\) features to a masked \(x \odot \varphi(x)\) without altering the prediction, then \(\angles{f, \varphi}\) would be incrementally stable at \(x\) with radius \(r\).
We next introduce the upper relaxation that we call decremental stability.

\begin{definition} \label{def:dec-stable}
The explainable model \(\angles{f, \varphi}\) is decrementally stable at \(x\) with radius \(r\) if \(f(x \odot \alpha) \cong f(x \odot \varphi(x))\) for all \(\alpha \succeq \varphi(x)\) where \(\norm{\mbf{1} - \alpha}_1 \leq r\).
\end{definition}

Decremental stability is a subtractive form of stability, in contrast to the additive nature of incremental stability.
Particularly, decremental stability considers the case where \(\alpha\) has much more features than \(\varphi(x)\).
If one can provably remove up to \(r\) non-explanatory features from the full \(x\) without altering the prediction, then \(\angles{f, \varphi}\) is decrementally stable at \(x\) with radius \(r\).
Note also that decremental stability necessarily entails consistency of \(\angles{f, \varphi}\), but for simplicity of definitions, we do not enforce this for incremental stability.
Furthermore, observe that for a sufficiently large radius of \(r = \ceil{(n - \norm{\varphi(x)}_1) / 2}\), incremental and decremental stability together imply stability.

\begin{remark}
Similar notions to the above have been proposed in the literature, and we refer to~\cite{nauta2022anecdotal} for an extensive survey.
In particular, for~\cite{nauta2022anecdotal}, consistency is akin to \emph{preservation}, and stability is similar to \emph{continuity}, except we are concerned with adding features.
In this regard, incremental stability is most similar to \emph{incremental addition} and decremental stability to \emph{incremental deletion}.
\end{remark}

\subsection{Lipschitz Smoothness Entails Stability Guarantees}
\label{sec:lipschitz-for-stability}

If \(f : \mcal{X} \to [0,1]^m\) is Lipschitz with respect to the masking of features, then we can guarantee relaxed stability properties for the explainable model \(\angles{f, \varphi}\).
In particular, we require for all \(x \in \mcal{X}\) that \(f(x \odot \alpha)\) is Lipschitz with respect to the mask \(\alpha \in \{0,1\}^n\).
This then allows us to establish our main result (Theorem~\ref{thm:stability}), which we preview below in Remark~\ref{rem:main-result-sketch}.

\begin{remark}[Sketch of main result] \label{rem:main-result-sketch}
Consider an explainable model \(\angles{f, \varphi}\) where for all \(x \in \mcal{X}\) the function \(g(x, \alpha) = f(x \odot \alpha)\) is \(\lambda\)-Lipschitz in \(\alpha \in \{0,1\}^n\) with respect to the \(\ell^1\) norm.
Then at any \(x\), the radius of incremental stability \(r_{\mrm{inc}}\) and radius of decremental stability \(r_{\mrm{dec}}\) are respectively:
\begin{align*}
    r_{\mrm{inc}} = \frac{g_A (x, \varphi(x)) - g_B (x, \varphi(x))}{2 \lambda},
    \qquad
    r_{\mrm{dec}} = \frac{g_A (x, \mbf{1}) - g_B (x, \mbf{1})}{2 \lambda},
\end{align*}
where \(g_A - g_B\) is called the confidence gap, with \(g_A, g_B\) the top-two class probabilities:
\begin{align} \label{eq:confidence-gap}
    k^\star = \underset{1 \leq k \leq m}{\mrm{argmax}}\, g_k (x, \alpha),
    \qquad g_A (x, \alpha) = g_{k^\star} (x, \alpha),
    \qquad g_B (x, \alpha) = \max_{i \neq k^\star} g_i (x, \alpha).
\end{align}
\end{remark}

Observe that Lipschitz smoothness is, in fact, a stronger assumption than necessary, as besides \(\alpha \succeq \varphi(x)\), it also imposes guarantees on \(\alpha \preceq \varphi(x)\).
Nevertheless, Lipschitz smoothness is one of the few classes of properties that can be guaranteed and analyzed at scale on arbitrary models~\cite{yang2020randomized,levine2021improved}.

\subsection{Adapting Existing Feature Attribution Methods}
\label{sec:adapting-methods}

Most existing feature attribution methods assign a real-valued score to feature importance, rather than a binary value.
We therefore need to convert this to a binary-valued method for use with a stable explainable model.
Let \(\psi : \mcal{X} \to \mbb{R}^n\) be such a continuous-valued method like LIME~\cite{ribeiro2016should} or SHAP~\cite{lundberg2017unified}, and fix some desired incremental stability radius \(r_{\mrm{inc}}\) and decremental stability radius \(r_{\mrm{dec}}\).
Given some \(x \in \mcal{X}\) a simple construction for binary \(\varphi(x) \in \{0,1\}^n\) is described next.

\begin{remark}[Iterative construction of \(\varphi(x)\)]
Consider any \(x \in \mcal{X}\) and let \(\rho\) be an index ordering on \(\psi(x)\) from high-to-low (i.e. most probable class first).
Initialize \(\alpha = \mbf{0}\), and for each \(i \in \rho\):
assign \(\alpha_i \leftarrow 1\) then check whether \(\angles{f, \varphi : x \mapsto \alpha}\) is now consistent, incrementally stable with radius \(r_{\mrm{inc}}\), and decrementally stable with radius \(r_{\mrm{dec}}\).
If so, terminate with \(\varphi(x) = \alpha\), and continue otherwise.

\end{remark}

Note that the above method of constructing \(\varphi(x)\) does not impose sparsity guarantees in the way that we may guarantee stability through Lipschitz smoothness.
Instead, the ordering from a continuous-valued \(\psi(x)\) serves as a greedy heuristic for constructing \(\varphi(x)\).
We show in Section~\ref{sec:experiments} that some feature attributions (e.g., SHAP~\cite{lundberg2017unified}) tend to yield sparser selections on average compared to others (e.g., Vanilla Gradient Saliency~\cite{simonyan2013deep}).

\section{Multiplicative Smoothing for Lipschitz Constants}
\label{sec:smoothing}
In this section, we present our main technical contribution of Multiplicative Smoothing (\mus{}).
The goal is to transform an arbitrary base classifier \(h : \mcal{X} \to [0,1]^m\) into a smoothed classifier \(f : \mcal{X} \to [0,1]^m\) that is Lipschitz with respect to the masking of features.
This then allows one to couple \(f\) with an explanation method \(\varphi\) in order to form an explainable model \(\angles{f, \varphi}\) with provable stability guarantees.

We give an overview of our \mus{} in Section~\ref{sec:mus-overview}, where we illustrate that a principal motivation for its development is because standard smoothing techniques may violate a property that we call \emph{masking equivalence}.
We present the Lipschitz constant of the smoothed classifier \(f\) in Section~\ref{sec:certifying-stability} and show how this is used to certify stability.
Finally, we give an efficient computation of \mus{} in Section~\ref{sec:derand-smoothing}, allowing us to exactly evaluate \(f\) at a low sample complexity.

\begin{figure}[ht]
\centering

\includegraphics[width=0.95\textwidth]{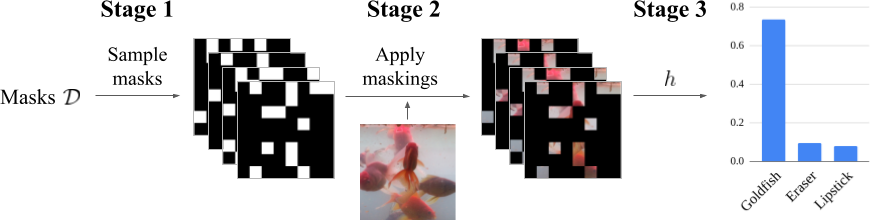}

\caption{
Evaluating \(f(x)\) is done in three stages.
\textbf{(Stage 1)} Generate \(N\) samples of binary masks \(s^{(1)}, \ldots, s^{(N)} \in \{0,1\}^n\), where each coordinate is Bernoulli with parameter \(\lambda\) (here \(\lambda = 1/4\)).
\textbf{(Stage 2)} Apply each mask on the input to yield \(x \odot s^{(i)}\) for \(i = 1, \ldots, N\).
\textbf{(Stage 3)} Average over \(h(x \odot s^{(i)})\) to compute \(f(x)\), and note that the predicted class is given by a weighted average.
}
\label{fig:durt-overview}
\end{figure}

\subsection{Technical Overview of MuS}
\label{sec:mus-overview}

Our key insight is that randomly dropping (i.e., zeroing) features will attain the desired smoothness.
In particular, we uniformly drop features with probability \(1 - \lambda\) by sampling binary masks \(s \in \{0,1\}^n\) from some distribution \(\mcal{D}\) where each coordinate is distributed as \(\mrm{Pr}[s_i = 1] = \lambda\).
Then define \(f\) as:
\begin{align} \label{eq:mus-sketch-f}
    f(x) = \expval{s \sim \mcal{D}} h(x \odot s),
    \qquad \text{such that \(s_i \sim \mcal{B}(\lambda)\) for \(i = 1, \ldots, n\)}
\end{align}
where \(\mcal{B}(\lambda)\) is the Beronulli distribution with parameter \(\lambda \in [0,1]\).
We give an overview of evaluating \(f(x)\) in Figure~\ref{fig:durt-overview}.
Importantly, our main results on smoothness (Theorem~\ref{thm:mus}) and stability (Theorem~\ref{thm:stability}) hold provided each coordinate of \(\mcal{D}\) is marginally Bernoulli with parameter \(\lambda\), and so we avoid fixing a particular choice for now.
However, it will be easy to intuit the exposition with \(\mcal{D} = \mcal{B}^n (\lambda)\), the coordinate-wise i.i.d. Bernoulli distribution with parameter \(\lambda\).


We can equivalently parametrize \(f\) using the mapping \(g(x, \alpha) = f(x \odot \alpha)\), where it follows that:
\begin{align} \label{eq:mus-sketch-g}
    g(x, \alpha)
        = \expval{s \sim \mcal{D}}
            h(x \odot \tilde{\alpha}),
    \qquad \tilde{\alpha} = \alpha \odot s.
\end{align}
Note that one could have alternatively first defined \(g\) and then \(f\) due to the identity \(g(x, \mbf{1}) = f(x)\).
We require that the relationship between \(f\) and \(g\) follows an identity that we call \emph{masking equivalence}:
\begin{align} \label{eq:masking-equivalence}
    g(x \odot \alpha, \mbf{1}) = f(x \odot \alpha) = g(x, \alpha),
    \qquad \text{for all \(x \in \mcal{X}\) and \(\alpha \in \{0,1\}^n\)}.
\end{align}
This follows by the definition of \(g\), and the relevance to stability is this: if masking equivalence holds, then we can rewrite stability properties involving \(f\) in terms of \(g\)'s second parameter as follows:
\begin{align}
    f(x \odot \alpha)
    = g(x, \alpha)
    \cong g(x, \varphi(x))
    = f(x \odot \varphi(x)),
    \qquad \text{for all \(\alpha \succeq \varphi(x)\)},
    \tag{c.f. Definition~\ref{def:stable}}
\end{align}
where incremental and decremental stability may be analogously defined.
This translation is useful, as we will prove that \(g\) is \(\lambda\)-Lipschitz in its second parameter (Theorem~\ref{thm:mus}), which then allows us to establish the desired stability properties (Theorem~\ref{thm:stability}).

Since many choices are valid, we have not given an explicit construction for \(\mcal{D}\).
Rather, so long as each coordinate of \(s \sim \mcal{D}\) obeys \(s_i \sim \mcal{B}(\lambda)\) then the Lipschitz properties for \(g\) follow.
The implication here is that although simple distributions like \(\mcal{B}^n (\lambda)\) suffice for \(\mcal{D}\), they may not be sample efficient.
We show in Section~\ref{sec:derand-smoothing} how to exploit a structured statistical dependence in order to reduce the sample complexity of computing \mus{}.

Importantly, we are motivated to develop \mus{} because standard smoothing techniques, namely additive smoothing~\cite{cohen2019certified,yang2020randomized}, may fail to satisfy masking equivalence.
Additive smoothing is by far the most popular smoothing technique and differs from our scheme~\eqref{eq:mus-sketch-g} in how noise is applied, where let \(\mcal{D}_{\mrm{add}}\) and \(\mcal{D}_{\mrm{mult}}\) be any two distributions on \(\mbb{R}^n\):
\begin{align*}
    g(x, \alpha) = \expval{s \sim \mcal{D}}
        h(x \odot \tilde{\alpha}),
    \qquad
    \tilde{\alpha}
        = \begin{dcases}
            \alpha + s, \enskip s \sim \mcal{D}_{\mrm{add}}, &\quad \text{additive noise} \\
            \alpha \odot s, \enskip s \sim \mcal{D}_{\mrm{mult}}, &\quad \text{multiplicative noise}
        \end{dcases}
\end{align*}

Particularly, additive smoothing has counterexamples for masking equivalence.

\begin{proposition}
\label{prop:additive-violates}
    There exists \(h : \mcal{X} \to [0,1]\) and distribution \(\mcal{D}\), where for
    \begin{align*}
        g^+(x, \alpha)
        = \expval{s \sim \mcal{D}} h(x \odot \tilde{\alpha}),
        \qquad \tilde{\alpha} = \alpha + s,
    \end{align*}
    we have \(g^+(x, \alpha) \neq g^+(x \odot \alpha, \mbf{1})\) for some \(x \in \mcal{X}\) and \(\alpha \in \{0,1\}^n\).
\end{proposition}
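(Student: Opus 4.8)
The plan is to directly compare the two expressions appearing in the claimed identity~\eqref{eq:masking-equivalence} under additive noise and exhibit a coordinate where they are forced to disagree. Expanding both sides coordinate-wise, $g^+(x, \alpha) = \expval{s \sim \mcal{D}} h(x \odot (\alpha + s))$ evaluates $h$ at $x_i(\alpha_i + s_i)$ in coordinate $i$, whereas $g^+(x \odot \alpha, \mbf{1}) = \expval{s \sim \mcal{D}} h\big((x \odot \alpha) \odot (\mbf{1} + s)\big)$ evaluates it at $x_i \alpha_i (1 + s_i)$. First I would observe that these two coordinate expressions already diverge whenever $\alpha_i = 0$: the former reduces to $x_i s_i$, which still carries noise, while the latter collapses to $0$. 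This is precisely the structural defect of additive smoothing --- zeroing a feature through $\alpha$ does not suppress the additive noise injected on that feature --- and it is the asymmetry I would exploit.

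Next I would specialize to the minimal witness $n = 1$ and $\alpha = \mbf{0}$, so that $g^+(x, \mbf{0}) = \expval{s} h(x s)$ while $g^+(x \odot \mbf{0}, \mbf{1}) = g^+(\mbf{0}, \mbf{1}) = \expval{s} h(0) = h(0)$. To finish, I would instantiate $\mcal{D}$ as a Bernoulli distribution $\mcal{B}(\lambda)$ with $\lambda \in (0,1)$ --- matching the marginals used by \mus{} in~\eqref{eq:mus-sketch-g}, so that the comparison is fair --- which yields $g^+(x, \mbf{0}) = \lambda h(x) + (1 - \lambda) h(0)$ and hence $g^+(x, \mbf{0}) - g^+(\mbf{0}, \mbf{1}) = \lambda\big(h(x) - h(0)\big)$. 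Choosing any non-constant $h : \mbb{R} \to [0,1]$, for instance a clamped linear map $h(z) = \min(\max(z, 0), 1)$, together with any $x$ satisfying $h(x) \neq h(0)$, makes this difference strictly nonzero and establishes the proposition.

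There is no genuine analytic obstacle here, since the statement only asks for a single counterexample; the real content lies in the diagnosis of the first step rather than in any computation. The main care I would take is twofold: to keep the witness canonical rather than ad hoc, so that it transparently isolates the failure mode, and to ensure the chosen ingredients respect the stated constraints --- namely $h$ valued in $[0,1]$ and $\mcal{D}$ a genuine distribution with the same Bernoulli marginals as the multiplicative scheme --- so that the contrast with \mus{} is not an artifact of incomparable noise models. I would also remark that the same phenomenon persists for any higher-dimensional $\alpha$ possessing at least one zero coordinate, confirming that the violation is generic rather than a degenerate $n = 1$ accident.
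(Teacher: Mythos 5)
Your proof is correct, and it is in fact more complete than the one the paper gives. The paper's own argument is essentially non-constructive: it observes that it \emph{suffices} to have $h, x, \alpha$ with $h(x \odot (\alpha + s)) > h((x \odot \alpha) \odot (\mathbf{1} + s))$ on a nonempty set of $s$, and then places a distribution on that set --- but it never exhibits such an $h, x, \alpha$, leaving the reader to supply the witness. You supply it explicitly: the $n=1$, $\alpha = 0$ specialization reduces the two sides to $\mathbb{E}_s\, h(xs)$ versus $h(0)$, and with $s \sim \mathcal{B}(\lambda)$ the gap is exactly $\lambda(h(x) - h(0))$, which any non-constant $h$ valued in $[0,1]$ makes nonzero. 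Your computation checks out. Two aspects of your version add genuine value beyond the paper's sketch: first, insisting that $\mathcal{D}$ have the same Bernoulli marginals as the \mus{} scheme rules out the objection that the failure is an artifact of an incomparable noise model; second, isolating the coordinate-level identity $x_i(\alpha_i + s_i) = x_i s_i$ versus $x_i \alpha_i(1+s_i) = 0$ when $\alpha_i = 0$ makes precise the ``information leakage'' that the paper only describes informally after the proof. Your closing remark that the phenomenon persists for any $\alpha$ with a zero coordinate is fine as a heuristic aside, though as stated it is not automatic for an arbitrary fixed $h$ (cancellations in expectation are possible); since the proposition only requires one counterexample, this does not affect correctness.
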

\begin{proof}
    Observe that it suffices to have \(h, x, \alpha\) such that \(h(x \odot (\alpha + s)) > h ((x \odot \alpha) \odot (\mbf{1} + s))\) for a non-empty set of \(s \in \mbb{R}^n\).
    Let \(\mcal{D}\) be a distribution on these \(s\), then:
    \begin{align*}
        g^+(x, \alpha)
        = \expval{s \sim \mcal{D}}
            h (x \odot (\alpha + s))
        > 
        \expval{s \sim \mcal{D}}
            h ((x \odot \alpha) \odot (\mbf{1} + s))
        = g^+ (x \odot \alpha, \mbf{1})
    \end{align*}
\end{proof}

Intuitively, this occurs because additive smoothing primarily applies noise by perturbing feature values, rather than completely masking them.
As such, there might be ``information leakage'' when non-explanatory bits of \(\alpha\) are changed into non-zero values.
This then causes each sample of \(h(x \odot \tilde{\alpha})\) within \(g(x, \alpha)\) to observe more features of \(x\) than it would have been able to otherwise.

\subsection{Certifying Stability Properties with Lipschitz Classifiers}
\label{sec:certifying-stability}

Our core technical result is in showing that \(f\) as defined in~\eqref{eq:mus-sketch-f} is Lipschitz to the masking of features.
We present \mus{} in terms of \(g\), where it is parametric with respect to the distribution \(\mcal{D}\): so long as \(\mcal{D}\) satisfies a coordinate-wise Bernoulli condition, then it is usable with \mus{}.

\begin{theorem}[MuS] \label{thm:mus}
    Let \(\mcal{D}\) be any distribution on \(\{0,1\}^n\) where each coordinates of \(s \sim \mcal{D}\) is distributed as \(s_i \sim \mcal{B}(\lambda)\).
    Consider any \(h : \mcal{X} \to [0,1]\) and define \(g : \mcal{X} \times \{0,1\}^n \to [0,1]\) as
    \begin{align*}
        g(x, \alpha)
            = \expval{s \sim \mcal{D}} h(x \odot \tilde{\alpha}),
            \qquad
            \tilde{\alpha} = \alpha \odot s.
    \end{align*}
    Then the function \(g(x, \cdot) : \{0,1\}^n \to [0,1]\) is \(\lambda\)-Lipschitz in the \(\ell^1\) norm for all \(x \in \mcal{X}\).
\end{theorem}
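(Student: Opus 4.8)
The plan is to reduce the claim to the case where the two masks differ in a single coordinate, and then exploit the multiplicative structure of the masking through a coupling argument. Since $\alpha, \alpha' \in \{0,1\}^n$, the quantity $\norm{\alpha - \alpha'}_1$ equals the Hamming distance, i.e. the number of coordinates on which $\alpha$ and $\alpha'$ disagree. Because the target Lipschitz bound is additive in this distance, I would first establish the special case in which $\alpha$ and $\alpha'$ differ in exactly one coordinate $j$, proving $\abs{g(x, \alpha) - g(x, \alpha')} \leq \lambda$; the general bound then follows by telescoping along a path of single-bit flips connecting $\alpha$ to $\alpha'$, combined with the triangle inequality.

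For the single-coordinate case, the key step is to evaluate both expectations against the \emph{same} draw $s \sim \mcal{D}$, using linearity over a common measure to write
\begin{align*}
    g(x, \alpha) - g(x, \alpha')
        = \expval{s \sim \mcal{D}}
            \bracks{h(x \odot (\alpha \odot s)) - h(x \odot (\alpha' \odot s))}.
\end{align*}
Without loss of generality suppose $\alpha_j = 1$ and $\alpha'_j = 0$, so that $\alpha$ and $\alpha'$ agree on every coordinate $i \neq j$. I would then split on the value of $s_j$. On the event $\{s_j = 0\}$, both masked vectors $\alpha \odot s$ and $\alpha' \odot s$ vanish in coordinate $j$ and coincide everywhere else (since $\alpha_i s_i = \alpha'_i s_i$ for $i \neq j$), so the two inputs to $h$ are identical and the integrand is exactly zero. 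On the event $\{s_j = 1\}$, the two masked inputs can differ only in coordinate $j$, and since $h$ maps into $[0,1]$, the integrand is bounded in absolute value by $1$.

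Combining these, the expectation of the integrand is at most $\mrm{Pr}[s_j = 1] \cdot 1 = \lambda$, where the only fact about $\mcal{D}$ invoked is the marginal law $s_j \sim \mcal{B}(\lambda)$ of a single coordinate. This gives $\abs{g(x, \alpha) - g(x, \alpha')} \leq \lambda$ for single-bit differences, and the path argument then yields $\abs{g(x, \alpha) - g(x, \alpha')} \leq \lambda \norm{\alpha - \alpha'}_1$ for all $\alpha, \alpha' \in \{0,1\}^n$. The hard part is recognizing the shared-$s$ coupling: it is precisely the multiplicative form of the masking, $\tilde{\alpha} = \alpha \odot s$, that forces the two inputs to become identical whenever $s_j = 0$, collapsing the bound to the probability $\lambda$ of activating the disputed coordinate. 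This also explains why no independence across coordinates is required --- the argument only ever inspects the marginal of the single coordinate $j$ --- so the result holds for any $\mcal{D}$ with Bernoulli marginals, exactly as stated.
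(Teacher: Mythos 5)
Your proposal is correct and rests on the same key idea as the paper's proof: evaluating both expectations against a shared draw of $s$ and observing that the integrand vanishes unless $s$ activates a coordinate where the masks disagree, an event whose probability is controlled by the Bernoulli marginal alone, with no independence assumption. The only difference is organizational --- you reduce to single-bit flips and telescope, whereas the paper bounds $\Pr[\tilde{\alpha} \neq \tilde{\alpha}']$ for general $\alpha, \alpha'$ directly via a union bound over the disagreeing coordinates --- and both routes yield the identical bound $\lambda \norm{\alpha - \alpha'}_1$.
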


The strength of this result is in its weak assumptions. First, the theorem applies to any model \(h\) and input \(x \in \mcal{X}\). It further suffices that each coordinate is distributed as \(s_i \sim \mcal{B}(\lambda)\), and we emphasize that statistical independence between different \(s_i, s_j\) is \emph{not assumed}.
This allows us to construct \(\mcal{D}\) with structured dependence in Section~\ref{sec:derand-smoothing}, such that we may exactly and efficiently evaluate \(g(x, \alpha)\) at a sample complexity of \(N \ll 2^n\).
A low sample complexity is important for making \mus{} practically usable, as otherwise, one must settle for the expected value subject to probabilistic guarantees.
For instance, simpler distributions like \(\mcal{B}^n (\lambda)\) do satisfy the requirements of Theorem~\ref{thm:mus} --- but costs \(2^n\) samples because of coordinate-wise independence.
Whatever choice of \(\mcal{D}\), one can guarantee stability so long as \(g\) is Lipschitz in its second argument.

\begin{theorem}[Stability] \label{thm:stability}
    Consider any \(h : \mcal{X} \to [0,1]^m\) with coordinates \(h_1, \ldots, h_m\).
    Fix \(\lambda \in [0,1]\) and let \(g_1, \ldots, g_m\) be the respectively smoothed coordinates as in Theorem~\ref{thm:mus}, using which we analogously define \(g : \mcal{X} \times \{0,1\}^n \to [0,1]^m\).
    Also define \(f(x) = g(x, \mbf{1})\).
    Then for any explanation method \(\varphi\) and input \(x \in \mcal{X}\), the explainable model \(\angles{f, \varphi}\) is incrementally stable with radius \(r_{\mrm{inc}}\) and decrementally stable with radius \(r_{\mrm{dec}}\):
    \begin{align*}
        r_{\mrm{inc}} = \frac{g_A (x, \varphi(x)) - g_B (x, \varphi(x))}{2 \lambda},
        \qquad
        r_{\mrm{dec}} = \frac{g_A (x, \mbf{1}) - g_B (x, \mbf{1})}{2 \lambda},
    \end{align*}
    where \(g_A - g_B\) is the confidence gap as in~\eqref{eq:confidence-gap}.
\end{theorem}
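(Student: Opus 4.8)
The plan is to reduce every stability claim about $f$ to a statement about the second argument of $g$, and then to certify that the top class of $g(x,\cdot)$ is locally constant using the $\lambda$-Lipschitz bound from Theorem~\ref{thm:mus}. By masking equivalence~\eqref{eq:masking-equivalence} we have $f(x \odot \alpha) = g(x,\alpha)$ for every $\alpha \in \{0,1\}^n$, so $f(x \odot \alpha) \cong f(x \odot \varphi(x))$ is equivalent to the masks $\alpha$ and $\varphi(x)$ inducing the same $\mrm{argmax}$ coordinate of $g(x,\cdot)$. Hence it suffices to control how the predicted class of $g(x,\cdot)$ can change as its mask argument moves in $\ell^1$ distance.

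The core is a single certification lemma. Fix $x$ and a center mask $\alpha_0$, let $A = k^\star$ be the top class of $g(x,\alpha_0)$ as in~\eqref{eq:confidence-gap}, and let $B$ realize the runner-up value $g_B(x,\alpha_0)$. For any mask $\alpha$, apply the two-sided $\lambda$-Lipschitz estimate of Theorem~\ref{thm:mus} coordinatewise:
\begin{align*}
    g_A(x,\alpha) \;\ge\; g_A(x,\alpha_0) - \lambda \norm{\alpha - \alpha_0}_1,
    \qquad
    g_i(x,\alpha) \;\le\; g_i(x,\alpha_0) + \lambda \norm{\alpha - \alpha_0}_1
\end{align*}
for every $i$. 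Since $g_i(x,\alpha_0) \le g_B(x,\alpha_0)$ for all $i \neq A$ by the definition of $g_B$, the class $A$ stays strictly on top whenever
\begin{align*}
    g_A(x,\alpha_0) - \lambda \norm{\alpha - \alpha_0}_1 \;>\; g_B(x,\alpha_0) + \lambda \norm{\alpha - \alpha_0}_1,
\end{align*}
which rearranges to $\norm{\alpha - \alpha_0}_1 < \big(g_A(x,\alpha_0) - g_B(x,\alpha_0)\big)/(2\lambda)$. In other words $g(x,\alpha) \cong g(x,\alpha_0)$ on the $\ell^1$ ball of radius $\big(g_A(x,\alpha_0)-g_B(x,\alpha_0)\big)/(2\lambda)$ about $\alpha_0$.

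It then remains to instantiate the center. For incremental stability (Definition~\ref{def:inc-stable}) I take $\alpha_0 = \varphi(x)$: every admissible $\alpha \succeq \varphi(x)$ with $\norm{\alpha - \varphi(x)}_1 \le r_{\mrm{inc}}$ lies in the certified ball, so $f(x \odot \alpha) = g(x,\alpha) \cong g(x,\varphi(x)) = f(x \odot \varphi(x))$, giving exactly the claimed $r_{\mrm{inc}}$. For decremental stability (Definition~\ref{def:dec-stable}) I take $\alpha_0 = \mbf{1}$, using $\norm{\mbf{1} - \alpha}_1 = \norm{\alpha - \mbf{1}}_1$: every admissible $\alpha$ with $\norm{\mbf{1}-\alpha}_1 \le r_{\mrm{dec}}$ satisfies $f(x \odot \alpha) = g(x,\alpha) \cong g(x,\mbf{1}) = f(x)$, yielding the claimed $r_{\mrm{dec}}$ with the full input $f(x) = g(x,\mbf{1})$ as the natural reference.

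I expect the main obstacle to be the bookkeeping in the decremental direction rather than any hard analysis. The certification compares against the center $\mbf{1}$ and therefore produces $f(x \odot \alpha) \cong f(x) = g(x,\mbf{1})$, whereas Definition~\ref{def:dec-stable} is phrased relative to $f(x \odot \varphi(x))$. These two references coincide exactly when $\angles{f,\varphi}$ is consistent, i.e. $f(x) \cong f(x \odot \varphi(x))$—precisely the property that decremental stability entails—so I would carry $f(x)$ as the reference and invoke consistency to recover the stated form. The remaining care points are routine: bounding every competing coordinate simultaneously through $g_B$ (rather than a single fixed competitor) is what legitimizes the one threshold $\big(g_A - g_B\big)/(2\lambda)$, and since $\norm{\cdot}_1$ is integer-valued on masks the admissible radius is effectively $\lfloor r \rfloor$, so the strict-versus-nonstrict inequality at the ball boundary causes no trouble.
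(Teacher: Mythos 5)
Your proof follows essentially the same route as the paper's: masking equivalence reduces every stability claim to the second argument of \(g\), the \(\lambda\)-Lipschitz bound from Theorem~\ref{thm:mus} applied to the top class and its competitors yields the confidence-gap threshold \((g_A - g_B)/(2\lambda)\), and instantiating the center at \(\varphi(x)\) and at \(\mbf{1}\) gives \(r_{\mrm{inc}}\) and \(r_{\mrm{dec}}\) respectively. Your flag on the decremental reference point --- that the certification naturally compares against \(f(x) = g(x, \mbf{1})\) while Definition~\ref{def:dec-stable} is phrased relative to \(f(x \odot \varphi(x))\), so consistency is needed to close the gap --- identifies a genuine bookkeeping step that the paper's proof elides with ``replace \(\varphi(x)\) with \(\mbf{1}\),'' and your resolution is the correct one.
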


Note that it is only in the case where the radius \(\geq 1\) that non-trivial stability guarantees exist.
Because each \(g_k\) has range in \([0,1]\), this means that a Lipschitz constant of \(\lambda \leq 1/2\) is necessary to attain at least one radius of stability.
We present in Appendix~\ref{sec:basic-extensions} some extensions to \mus{} that allow one to achieve higher coverage of features.

\subsection{Exploiting Structured Dependency}
\label{sec:derand-smoothing}

We now present \(\mcal{L}_{qv} (\lambda)\), a distribution on \(\{0,1\}^n\) that allows for efficient and exact evaluation of a \mus{}-smoothed classifier.
Our construction is an adaption of~\cite{levine2021improved} from uniform to Bernoulli noise, where the primary insight is that one can parametrize \(n\)-dimensional noise using a single dimension via structured coordinate-wise dependence.
In particular, we use a \emph{seed vector} \(v\), where with an integer \emph{quantization parameter} \(q > 1\) there will only exist \(q\) distinct choices of \(s \sim \mcal{L}_{qv} (\lambda)\).
All the while, we still enforce that any such \(s\) is coordinate-wise Bernoulli with \(s_i \sim \mcal{B}(\lambda)\).
Thus, for a sufficiently small quantization parameter (i.e., \(q \ll 2^n\)), we may tractably enumerate through all \(q\) possible choices of \(s\) and thereby evaluate a \mus{}-smoothed model with only \(q\) samples.


\begin{proposition}
    Fix integer \(q > 1\) and consider any vector \(v \in \{0, 1/q, \ldots, (q-1)/q\}^n\) and scalar \(\lambda \in \{1/q, \ldots, q/q\}\).
    Define \(s \sim \mcal{L}_{qv} (\lambda)\) to be a random vector in \(\{0,1\}^n\) with coordinates given by
    \begin{align*}
        s_i = \mbb{I}[t_i \leq \lambda],
        \qquad t_i = v_i + s_{\mrm{base}} \enskip\mrm{mod}\enskip 1,
        \qquad 
        s_{\mrm{base}} \sim \mcal{U}(\{1/q, \ldots, q/q\}) - 1/(2q).
    \end{align*}
    Then there are \(q\) distinct values of \(s\) and each coordinate is distributed as \(s_i \sim \mcal{B}(\lambda)\).
\end{proposition}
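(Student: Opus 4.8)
The plan is to split the statement into two independent claims — that $s$ assumes at most $q$ distinct values, and that each marginal obeys $s_i \sim \mcal{B}(\lambda)$ — and to dispatch the cardinality claim first. Every coordinate of $s$ is a deterministic function of the single scalar $s_{\mrm{base}}$, and $s_{\mrm{base}}$ is supported on the $q$-point set $\{(2k-1)/(2q) : k = 1, \ldots, q\}$; hence $s$, being a deterministic image of $s_{\mrm{base}}$, can realize at most $q$ distinct vectors, which is exactly the fact needed for the $q$-sample enumeration claimed in the surrounding text. All the real work is therefore in the marginal claim.

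For the marginal, I would fix a coordinate $i$ and write $v_i = a/q$ with $a \in \{0, \ldots, q-1\}$ and $\lambda = j/q$ with $j \in \{1, \ldots, q\}$, as permitted by the hypotheses. Substituting $s_{\mrm{base}} = (2k-1)/(2q)$ gives $v_i + s_{\mrm{base}} = (2a + 2k - 1)/(2q)$, an odd multiple of $1/(2q)$. The key step is to identify the law of $t_i = (v_i + s_{\mrm{base}}) \bmod 1$: since the numerator $2a + 2k - 1$ is odd while the modulus $2q$ is even, reduction modulo $1$ keeps $t_i$ on the half-integer grid $G = \{(2\ell - 1)/(2q) : \ell = 1, \ldots, q\}$. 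As $k$ ranges over $\{1, \ldots, q\}$ with $a$ held fixed, the numerators $2a+1, 2a+3, \ldots, 2a+2q-1$ are $q$ consecutive odd integers, which reduce modulo $2q$ to a permutation of the odd residues $\{1, 3, \ldots, 2q-1\}$. Consequently $t_i$ is uniform over $G$, and — crucially — this law is independent of $a$, hence of $v_i$.

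It then remains to count. With $\lambda = j/q = 2j/(2q)$, a grid point $(2\ell-1)/(2q) \in G$ satisfies $(2\ell-1)/(2q) \leq \lambda$ iff $2\ell - 1 \leq 2j$ iff $\ell \leq j$, so exactly $j$ of the $q$ points qualify. Therefore $\Pr[s_i = 1] = \Pr[t_i \leq \lambda] = j/q = \lambda$, which establishes $s_i \sim \mcal{B}(\lambda)$ and completes the proposition.

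The main obstacle — really the one delicate point — is the modular permutation argument together with the role of the $-1/(2q)$ shift baked into $s_{\mrm{base}}$. That shift places every possible value of $t_i$ strictly between consecutive multiples of $1/q$, so the threshold $\lambda = j/q$ never coincides with a realizable value of $t_i$; this is what makes the count exactly $j$ irrespective of whether one writes $\leq$ or $<$, and it is what decouples the marginal from $v_i$. I would take care to justify rigorously that $q$ consecutive odd integers form a complete system of odd residues modulo $2q$ (equivalently, that the differences $0, 2, \ldots, 2(q-1)$ are distinct modulo $2q$), since the uniformity of $t_i$ over $G$ — and hence the entire marginal claim — rests on this fact.
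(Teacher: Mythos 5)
Your proposal is correct and follows essentially the same route as the paper's proof: $s$ is a deterministic image of the $q$-valued $s_{\mrm{base}}$, each $t_i$ is uniform on the half-shifted grid $\{(2\ell-1)/(2q) : \ell = 1,\ldots,q\}$, and exactly $\lambda q$ of those grid points fall at or below the threshold $\lambda$. The only difference is that you supply the modular-permutation argument for the uniformity of $t_i$ and the explicit count, both of which the paper simply asserts, so your write-up is a strictly more detailed version of the same proof.
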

\begin{proof}
    First, observe that each of the \(q\) distinct values of \(s_{\mrm{base}}\) defines a unique value of \(s\) since we have assumed \(v\) and \(\lambda\) to be fixed.
    Next, observe that each \(t_i\) has \(q\) unique values uniformly distributed as \(t_i \sim \mcal{U}(1/q, \ldots, q/q\}) - 1/(2q)\).
    Because \(\lambda \in \{1/q, \ldots, q/q\}\) we therefore have \(\mrm{Pr}[t_i \leq \lambda] = \lambda\), which implies that \(s_i \sim \mcal{B}(\lambda)\).
\end{proof}

The seed vector \(v\) is the source of our structured coordinate-wise dependence, and the one-dimensional source of randomness \(s_{\mrm{base}}\) is used to generate the \(n\)-dimensional \(s\).
Such \(s \sim \mcal{L}_{qv} (\lambda)\) then satisfies the conditions for use in \mus{} (Theorem~\ref{thm:mus}), and this noise allows for an exact evaluation of the smoothed classifier in \(q\) samples.
We have found \(q = 64\) to be sufficient in practice and that values as low as \(q = 16\) also yield good performance.
We remark that one drawback is that one may get an unlucky seed \(v\), but we have not yet observed this in our experiments.

\section{Empirical Evaluations}
\label{sec:experiments}

We evaluate the quality of \mus{} on different classification models and explanation methods as they relate to stability guarantees.
To that end, we perform the following experiments.

\textbf{(E1) How good are the stability guarantees?}\enskip
A natural measure of quality for stability guarantees over a dataset exists: what radii are achieved, and at what frequency.
We investigate how different combinations of models, explanation methods, and \(\lambda\) affect this measure.

\textbf{(E2) What is the cost of smoothing?}\enskip
To increase the radius of a provable stability guarantee, we must decrease the Lipschitz constant \(\lambda\).
However, as \(\lambda\) decreases, more features are dropped during the smoothing process.
This experiment investigates this stability-accuracy trade-off.

\textbf{(E3) Which explanation method is best?}\enskip
We evaluate which existing feature attribution methods are amenable to strong stability guarantees.
We examine LIME~\cite{ribeiro2016should}, SHAP~\cite{lundberg2017unified}, Vanilla Gradient Saliency (VGrad)~\cite{simonyan2013deep}, and Integrated Gradients (IGrad)~\cite{sundararajan2017axiomatic}, with focus on the size of the explanation.

\textbf{(Experimental Setup)}\enskip
We use on two vision models (Vision Transformer~\cite{dosovitskiy2020image} and ResNet50~\cite{he2016deep}) and one language model (RoBERTa~\cite{liu2019roberta}).
We use ImageNet1K~\cite{russakovsky2015imagenet} as our vision dataset and TweetEval~\cite{barbieri2020tweeteval} sentiment analysis as our language dataset.
We use \emph{feature grouping} from Appendix~\ref{sec:feature-grouping} on ImageNet1K to reduce the \(3 \times 224 \times 224\) dimensional input into \(n = 64\) superpixel patches.
We report stability radii \(r\) in terms of the fraction of features, i.e., \(r/n\).
In all our experiments, we use the quantized noise as in Section~\ref{sec:derand-smoothing} with \(q = 64\) unless specified otherwise.
We refer to Appendix~\ref{sec:all-experiments} for training details and the comprehensive experiments.



\begin{figure}[ht]
\centering

\includegraphics[width=1.0\textwidth]{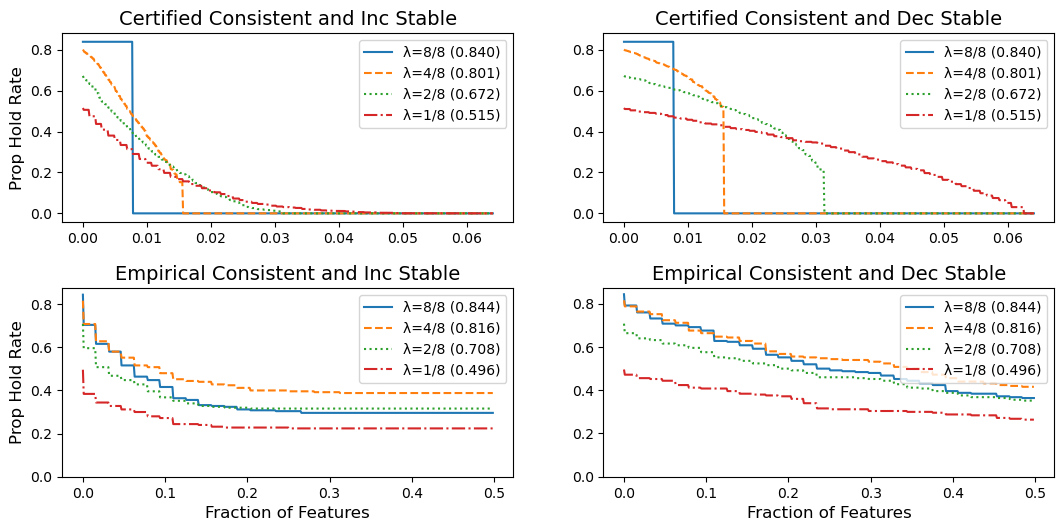}

\caption{
Rate of consistency and incremental (decremental) stability up to radius \(r\) vs. fraction of feature coverage \(r/n\).
Left: certified \(N_{\mrm{cert}} = 2000\); Right: empirical \(N_{\mrm{emp}} = 250\) with \(q = 16\).
}
\label{fig:q1}
\end{figure}

\subsection{(E1) Quality of Stability Guarantees}
\label{sec:q1}

We study how much radius of consistent and incremental (resp. decremental) stability is achieved, and how often.
We take an explainable model \(\angles{f, \varphi}\) where \(f\) is Vision Transformer and \(\varphi\) is SHAP with top-25\% feature selection.
We plot the rate at which a property holds (e.g., consistent and incrementally stable with radius \(r\)) as a function of radius (expressed as a fraction of features \(r/n\)).

We show our results in Figure~\ref{fig:q1}, where on the left we have the certified guarantees for \(N_{\mrm{cert}} = 2000\) samples from ImageNet1K; on the right we have the empirical radii for \(N_{\mrm{emp}} = 250\) samples obtained by applying a standard box attack~\cite{chen2017zoo} strategy with \(q=16\).
We observe from the certified results that the decremental stability radii are larger than those of incremental stability.
This is reasonable since the base classifier sees much more of the input when analyzing decremental stability and is thus more confident on average, i.e., achieves a larger confidence gap.
Moreover, our empirical radii often cover up to half of the input, which suggests that our certified analysis is quite conservative.

\begin{figure}[ht]
\centering

\includegraphics[width=1.0\textwidth]{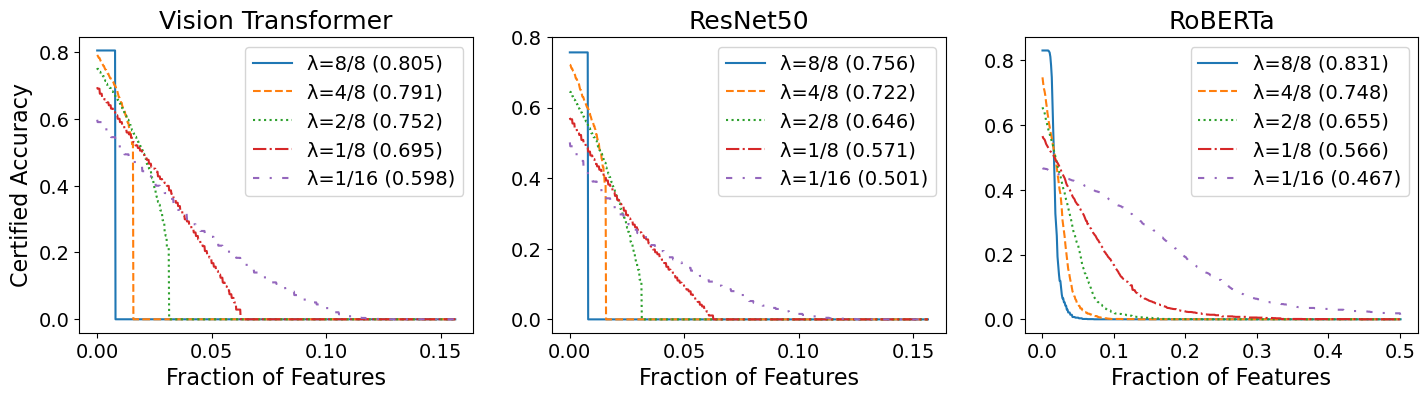}

\caption{
Certified accuracy vs. decremental stability radius.
\(N = 2000\).
}
\label{fig:q2}
\end{figure}

\subsection{(E2) Stability-Accuracy Trade-Offs}
\label{sec:q2}
We next investigate how smoothing impacts the classifier accuracy.
As \(\lambda\) decreases due to more smoothing, the base classifier sees increasingly zeroed-out features --- which should hurt accuracy.
We took \(N = 2000\) samples for each classifier on their respective datasets and plotted the certified accuracy vs. radius of decremental stability.

We show the results in Figure~\ref{fig:q2}, where the clean accuracy (in parentheses) decreases with \(\lambda\) as expected.
This accuracy drop is especially pronounced for ResNet50, and we suspect that the transformer architecture of Vision Transformer and RoBERTa makes them more resilient to the randomized masking of features.
Nevertheless, this experiment demonstrates that large models, especially transformers, can tolerate non-trivial noise from \mus{} while maintaining high accuracy.

\begin{figure}[ht]
\centering

\includegraphics[width=1.0\textwidth]{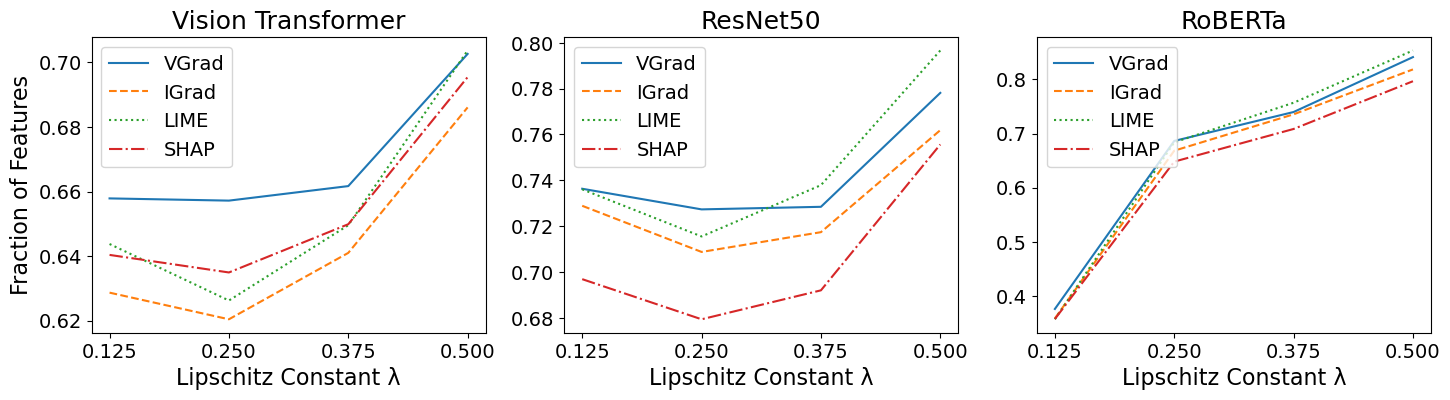}

\caption{
Average \(k/n\) vs. \(\lambda\), where \(k = \norm{\varphi(x)}_1\) is the number of features for \(\angles{f, \varphi}\) to be consistent, incrementally stable with radius \(1\), and decrementally stable with radius \(1\).
\(N = 250\).
}
\label{fig:q3}
\end{figure}

\subsection{(E3) Which Explanation Method to Pick?}
\label{sec:q3}

Finally, we explore which feature attribution method is best suited to stability guarantees of explainable model \(\angles{f, \varphi}\).
All four methods \(\psi \in \{\text{LIME}, \text{SHAP}, \text{VGrad}, \text{IGrad}\}\) are continuous-valued, for which we samples \(N = 250\) inputs from each model's respective dataset.
For each input \(x\), we use the feature importance ranking generated by \(\psi(x)\) to iteratively build \(\varphi(x)\) in a greedy manner like in Section~\ref{sec:adapting-methods}.
For some \(x\), let \(k_x = \varphi(x)/n\) be the number fraction of features needed for \(\angles{f, \varphi}\) to be consistent, incrementally stable with radius \(1\), and decrementally stable with radius \(1\).
We then plot the average \(k_x\) for different methods at \(\lambda \in \{1/8, \ldots, 4/8\}\) in Figure~\ref{fig:q3}, where note that SHAP tends to require fewer features to achieve the desired properties, while VGrad tends to require more.
However, we do not believe these to be decisive results, as many curves are relatively close, especially for Vision Transformer and ResNet50.

\section{Related Works}
For extensive surveys on explainability methods see~\cite{li2017feature,burkart2021survey,zhou2021evaluating,nauta2022anecdotal,lyu2022towards,li2022interpretable,bommasani2021opportunities}.
Notable feature attribution methods include Vanilla Gradient Saliency~\cite{simonyan2013deep}, SmoothGrad~\cite{smilkov2017smoothgrad}, Integrated Gradients~\cite{sundararajan2017axiomatic}, Grad-CAM~\cite{selvaraju2017grad},
Occlusion~\cite{zeiler2014visualizing},
LIME~\cite{ribeiro2016should},
SHAP~\cite{lundberg2017unified},
and their variants.
Of these, Shapley value-based~\cite{shapley1953value} methods~\cite{lundberg2017unified,sundararajan2020many,kwon2022weightedshap} are rooted in axiomatic principles, as are Integrated Gradients~\cite{sundararajan2017axiomatic,lundstrom2022rigorous}.
The work of \cite{slack2021reliable} finds confidence intervals over attribution scores.
A study of common feature attribution methods is done in~\cite{han2022explanation}.
Similar to our approach is~\cite{blanc2021provably}, which studies binary-valued classifiers and presents an algorithm with succinctness and probabilistic precision guarantees.
Different metrics for evaluating feature attributions are studied in~\cite{adebayo2018sanity,hooker2019benchmark,deyoung2019eraser,bastings2021will,rong2022consistent,zhou2022solvability,nauta2022anecdotal,adebayo2022post,zhou2022feature}.
Whether an attribution correctly identifies relevant features is a well-known issue~\cite{yang2019benchmarking,kindermans2019reliability}.
Many methods are also susceptible to adversarial attacks~\cite{slack2021counterfactual,ivankay2022fooling}.
As a negative result, \cite{bilodeau2022impossibility} shows that feature attributions have provably poor performance on sufficiently rich model classes.
Related to feature attributions are \emph{data attributions}~\cite{koh2017understanding,ilyas2022datamodels,park2023trak}, which assigns values to training data points.
Also related to formal guarantees are formal methods-based approaches towards explainability~\cite{bassan2023towards}.

\section{Conclusion}
\label{sec:conclusion}

We study provable stability guarantees for binary feature attribution methods through the framework of explainable models.
A selection of features is stable if the additional inclusion of other features does not alter its explanatory power.
We show that if the classifier is Lipschitz with respect to the masking of features, then one can guarantee relaxed variants of stability.
To achieve this Lipschitz condition, we develop a smoothing method called Multiplicative Smoothing (\mus{}).
This method is parametric to the choice of noise distribution, allowing us to construct and exploit distributions with structured dependence for exact and efficient evaluation.
We evaluate \mus{} on vision and language models and demonstrate that \mus{} yields strong stability guarantees at only a small cost to accuracy.


\section*{Acknowledgements}
This research was partially supported by NSF award SLES 2331783 and a gift from AWS AI.

\bibliographystyle{unsrt}
\bibliography{sources}

\newpage

\appendix

\section{Proofs and Theoretical Discussions}

In this Section, we present the proofs of our main results and some extensions to \mus{}.

\subsection{Proofs of Main Results}

\subsubsection{Proof of Theorem~\ref{thm:mus}}
\begin{proof}
By linearity, we have:
\begin{align*}
    g(x, \alpha) - g(x, \alpha')
    = \expval{s \sim \mcal{D}} h(x \odot \tilde{\alpha}) - h (x \odot \tilde{\alpha}'),
    \qquad \tilde{\alpha} = \alpha \odot s,
    \qquad \tilde{\alpha}' = \alpha' \odot s,
\end{align*}
so it suffices to analyze an arbitrary term by fixing some \(s \sim \mcal{D}\).
Consider any \(x \in \mcal{X}\), let \(\alpha, \alpha' \in \{0,1\}^n\), and define \(\delta = \alpha - \alpha'\).
Observe that \(\tilde{\alpha}_i \neq \tilde{\alpha}_i '\) exactly when \(\abs{\delta_i} = 1\) and \(s_i = 1\).
Since \(s_i \sim \mcal{B}(\lambda)\), we thus have \(\mrm{Pr}[\tilde{\alpha}_i \neq \tilde{\alpha}_i '] = \lambda \abs{\delta_i}\), and applying the union bound:
\begin{align*}
    \prob{s \sim \mcal{D}}
        \bracks{\tilde{\alpha} \neq \tilde{\alpha}'}
    = \prob{s \sim \mcal{D}}
        \bracks*{\bigcup_{i=1}^{n} \tilde{\alpha}_i \neq \tilde{\alpha}_i '}
    \leq \sum_{i=1}^{n} \lambda \abs{\delta_i}
    = \lambda \norm{\delta}_1,
\end{align*}
and so:
\begin{align*}
    \abs{g(x, \alpha) - g(x, \alpha')}
    &= \abs*{\expval{s \sim \mcal{D}}
            \bracks{h(x \odot \tilde{\alpha}) - h(x \odot \tilde{\alpha}')}} \\
    &= \Big|
            \prob{s \sim \mcal{D}} \bracks{\tilde{\alpha} \neq \tilde{\alpha}'}
            \cdot \expval{s \sim \mcal{D}} \bracks{h(x \odot \tilde{\alpha}) - h(x \odot \tilde{\alpha}') \,|\, \tilde{\alpha} \neq \tilde{\alpha}'} \\
    &\quad - \prob{s \sim \mcal{D}} \bracks{\tilde{\alpha} = \tilde{\alpha}'}
            \cdot \expval{s \sim \mcal{D}} \bracks{h(x \odot \tilde{\alpha}) - h(x \odot \tilde{\alpha}') \,|\, \tilde{\alpha} = \tilde{\alpha}'}
        \Big|.
\end{align*}
Note that \(\mbb{E}\, \bracks{h(x \odot \tilde{\alpha}) - h(x \odot \tilde{\alpha}') \,|\, \tilde{\alpha} = \tilde{\alpha}'} = 0\), and so
\begin{align*}
    \abs{g(x, \alpha) - g(x, \alpha')}
    &= \prob{s \sim \mcal{D}} \bracks{\tilde{\alpha} \neq \tilde{\alpha}'}
            \cdot
            \underbrace{
                \abs*{\expval{s \sim \mcal{D}} \bracks{h(x \odot \tilde{\alpha}) - h(x \odot \tilde{\alpha}') \,|\, \tilde{\alpha} \neq \tilde{\alpha}'}}
            }_{\leq 1 \enskip \text{because \(h(\cdot) \in [0,1]\)}}
    \\
    & \leq \prob{s \sim \mcal{D}} \bracks{\tilde{\alpha} \neq \tilde{\alpha}'} \\
    & \leq \lambda \norm{\delta}_1.
\end{align*}
Thus, \(g(x, \cdot)\) is \(\lambda\)-Lipschitz in the \(\ell^1\) norm.
\end{proof}


\subsubsection{Proof of Theorem~\ref{thm:stability}}
\begin{proof}
We first show incremental stability.
Consider any \(x \in \mcal{X}\), then by masking equivalence:
\begin{align*}
    f(x \odot \varphi(x))
    = g(x \odot \varphi(x), \mbf{1})
    = g(x, \varphi(x)),
\end{align*}
and let \(g_A, g_B\) be the top-two class probabilities of \(g\) as defined in~\eqref{eq:confidence-gap}.
By Theorem~\ref{thm:mus}, both \(g_A, g_B\) are Lipschitz in their second parameter, and so for all \(\alpha \in \{0,1\}^n\):
\begin{align*}
    \norm{g_A (x, \varphi(x)) - g_A (x, \alpha)}_1
    &\leq \lambda \norm{\varphi(x) - \alpha}_1 \\
    \norm{g_B (x, \varphi(x)) - g_B (x, \alpha)}_1
    &\leq \lambda \norm{\varphi(x) - \alpha}_1
\end{align*}
Observe that if \(\alpha\) is sufficiently close to \(\varphi(x)\), i.e.:
\begin{align*}
    2 \lambda \norm{\varphi(x) - \alpha}_1 \leq g_A (x, \varphi(x)) - g_B (x, \varphi(x)),
\end{align*}
then the top-class index of \(g(x, \varphi(x)\) and \(g(x, \alpha)\) are the same.
This means that \(g(x, \varphi(x)) \cong g(x, \alpha)\) and thus \(f(x \odot \varphi(x)) \cong f(x \odot \alpha)\), thus proving incremental stability with radius \(d(x, \varphi(x)) / (2 \lambda)\).

The decremental stability case is similar, except we replace \(\varphi(x)\) with \(\mbf{1}\).
\end{proof}

\subsection{Some Basic Extensions}
\label{sec:basic-extensions}
Below, we present some extensions to \mus{} that help increase the fraction of the input to which we can guarantee stability.

\subsubsection{Feature Grouping}
\label{sec:feature-grouping}
We have so far assumed that \(\mcal{X} = \mbb{R}^{n}\), but sometimes it may be desirable to group features together, e.g., color channels of the same pixel.
Our results also hold for more general \(\mcal{X} = \mbb{R}^{d_1} \times \cdots \times \mbb{R}^{d_n}\), where for such \(x \in \mcal{X}\) and \(\alpha \in \mbb{R}^{n}\) we lift \(\odot\) as
\begin{align*}
    \odot : \mcal{X} \times \mbb{R}^n \to \mcal{X},
    \qquad
    (x \odot \alpha)_i = x_i \cdot \mbb{I}[\alpha_i = 1] \in \mbb{R}^{d_i}.
\end{align*}
All of our proofs are identical under this construction, with the exception of the dimensionalities of terms like \((x \odot \alpha)\).
Figure~\ref{fig:non-stable} gives an example of feature grouping.


\section{All Experiments}
\label{sec:all-experiments}


\paragraph{Models, Datasets, and Explanation Methods}
We evaluate on two vision models (Vision Transformer~\cite{dosovitskiy2020image} and ResNet50~\cite{he2016deep}) and one language model (RoBERTa~\cite{liu2019roberta}).
For the vision dataset, we use ImageNet1K~\cite{russakovsky2015imagenet}; for the language dataset, we use TweetEval~\cite{barbieri2020tweeteval} sentiment analysis.
We use four explanation methods in SHAP~\cite{lundberg2017unified}, LIME~\cite{ribeiro2016should}, Integrated Gradients (IGrad)~\cite{sundararajan2017axiomatic}, and Vanilla Gradient Saliency (VGrad)~\cite{simonyan2013deep}; where we take \(\varphi(x)\) as the top-\(k\) weighted features.

\paragraph{Training Details}
We used Adam~\cite{kingma2014adam} as our optimizer with default parameters and a learning rate of \(10^{-6}\) for \(5\) epochs.
Because we consider \(\lambda \in \{1/8, 2/8, 3/8, 4/8, 8/8\}\) and \(h \in \{\text{Vision Transformer}, \text{ResNet50}, \text{RoBERTa}\}\), there are a total of \(15\) different models for most experiments.
To train with a particular \(\lambda\): for each training input \(x\), we generate two random maskings --- one where \(\lambda\) of the features are zeroed and one where \(\lambda/2\) of the features are zeroed.
This additional \(\lambda/2\) zeroing is to account for the fact that inputs to a smoothed model will be subject to masking by \(\lambda\) as well as \(\varphi(x)\), where the scaling factor of \(1/2\) is informed by our prior experience about the size of a stable explanation.

\paragraph{Miscellaneous Preprocessing}
For images in ImageNet1K we use feature grouping (Section~\ref{sec:feature-grouping}) to group the \(3 \times 224 \times 224\) dimensional image into patches of size \(3 \times 28 \times 28\), such that there remains \(n = 64\) feature groups.
Each feature of a feature group then receives the same value of noise during smoothing.
We report radii of stability as a fraction of the feature groups covered.
For example, if at some input from ImageNet1K, we get an incremental stability radius of \(r\), then we report \(r/64\) as the fraction of features up to which we are guaranteed to be stable.
This is especially amenable to evaluating RoBERTa on TweetEval where inputs do not have uniform token lengths, i.e., do not have uniform feature dimensions.
In all of our experiments, we use the quantized noise as in Section~\ref{sec:derand-smoothing} with a quantization parameter of \(q = 64\), with the exception of Appendix~\ref{sec:theoretical-vs-empirical} and Appendix~\ref{sec:stability-accuracy}.
Our experiments are organized as follows:

\begin{itemize}

    \item (Section~\ref{sec:exps-quality-stability})
    What is the quality of stability guarantees?

    \item (Section~\ref{sec:theoretical-vs-empirical})
    What is the theoretical vs empirical stability that can be guaranteed?
    \begin{itemize}
        \item We \(q = 64\) for theoretical guarantees, \(q = 16\) for empirical guarantees.
    \end{itemize}

    \item (Section~\ref{sec:stability-accuracy})
    What are the stability-accuracy trade-offs?
    \begin{itemize}
        \item We use \(q \in \{16, 32, 64, 128\}\) to study the effect of \(q\) on MuS-smoothed performance.
    \end{itemize}

    \item (Section~\ref{sec:which-method-best})
    Which explanation method is best?

    \item (Section~\ref{sec:q4})
    Does additive smoothing improve empirical stability?

    \item (Section~\ref{sec:q5})
    Does adversarial training improve empirical stability?
\end{itemize}

\subsection{Quality of Stability Guarantees}
\label{sec:exps-quality-stability}

Here we study what radii of stability are certifiable, and how often these can be achieved with different models and explanation methods.
We therefore consider explainable models \(\angles{f, \varphi}\) constructed from base models \(h \in \{\text{Vision Transformer}, \text{ResNet50}, \text{RoBERTa}\}\) and explanation methods \(\varphi \in \{\text{SHAP}, \text{LIME}, \text{IGrad}, \text{VGrad}\}\) with top-\(k \in \{1/8, 2/8, 3/8\}\) feature selection.
We take \(N = 2000\) samples from each model's respective datasets and compute the following value for each radius:
\begin{align*}
    \mrm{value}(r) =
    \frac{\# \{x : \angles{f, \varphi} \enskip \text{consistent and inc (dec) stable with radius \(\leq r\)}\}}{N}.
\end{align*}
Plots of incremental stability are on the left; plots of decremental stability are on the right.

\newpage
\begin{figure}[H]
\centering

\includegraphics[width=1.0\textwidth]{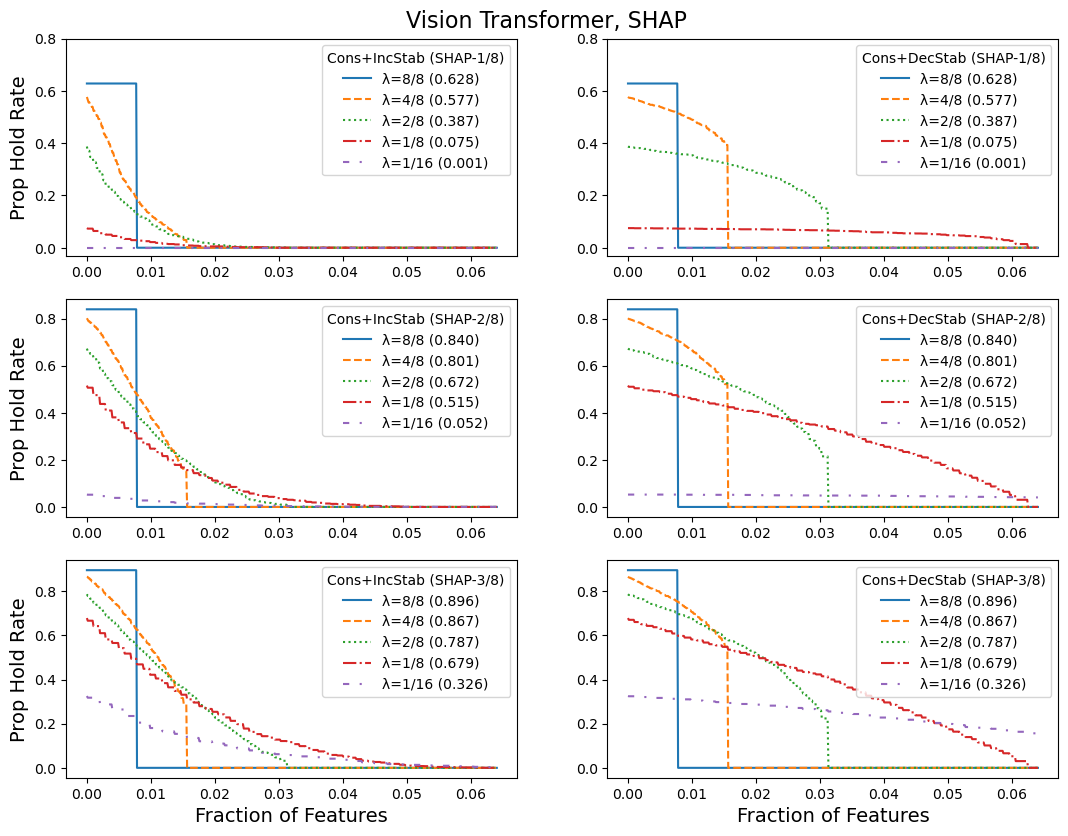}
\\

\includegraphics[width=1.0\textwidth]
{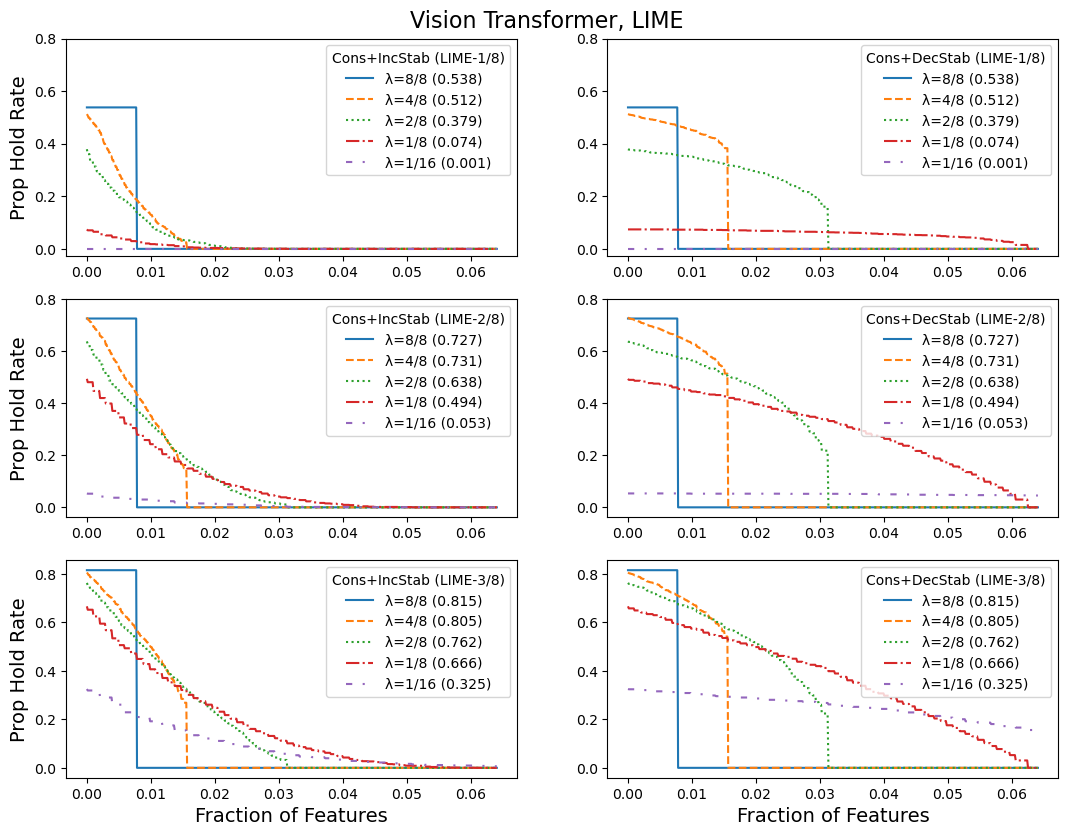}

\caption{
(Top) Vision Transformer with SHAP. (Bottom) Vision Transformer with LIME.
(Left) consistent and incrementally stable. (Right) consistent and decrementally stable.
}
\label{fig:exp_q1_vit16_shap_lime}
\end{figure}
\newpage
\begin{figure}[H]
\centering

\includegraphics[width=1.0\textwidth]{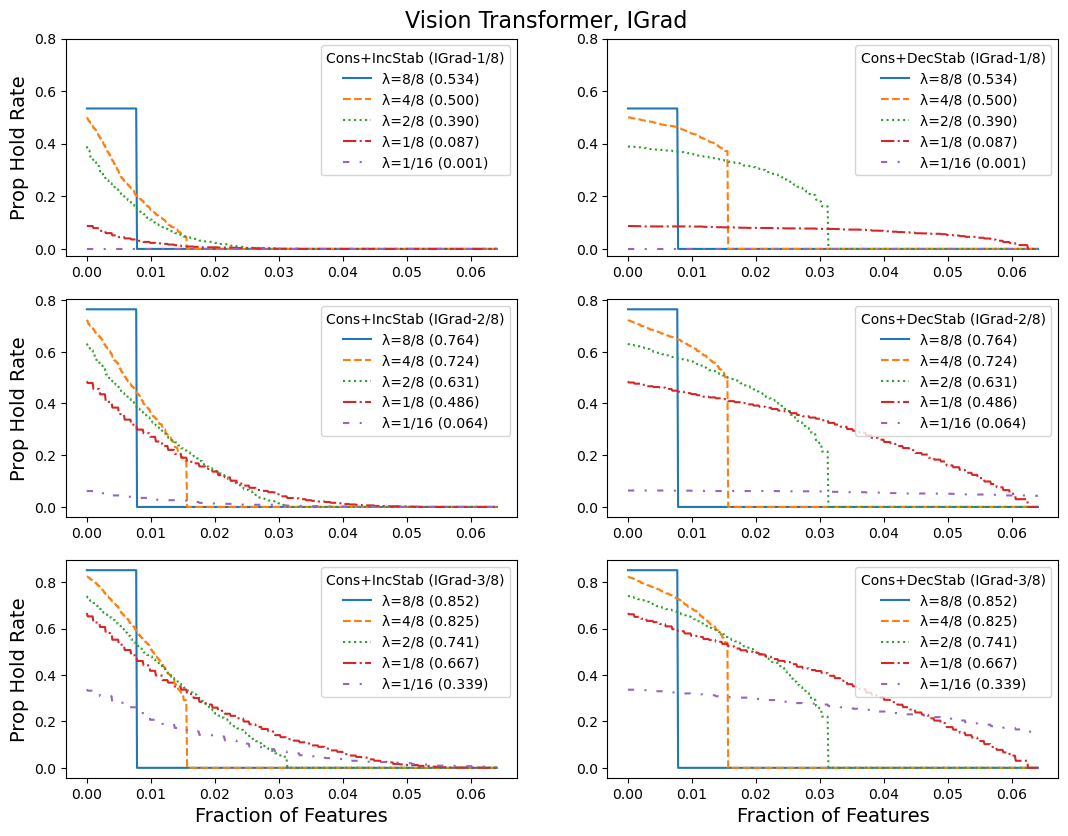}
\\

\includegraphics[width=1.0\textwidth]
{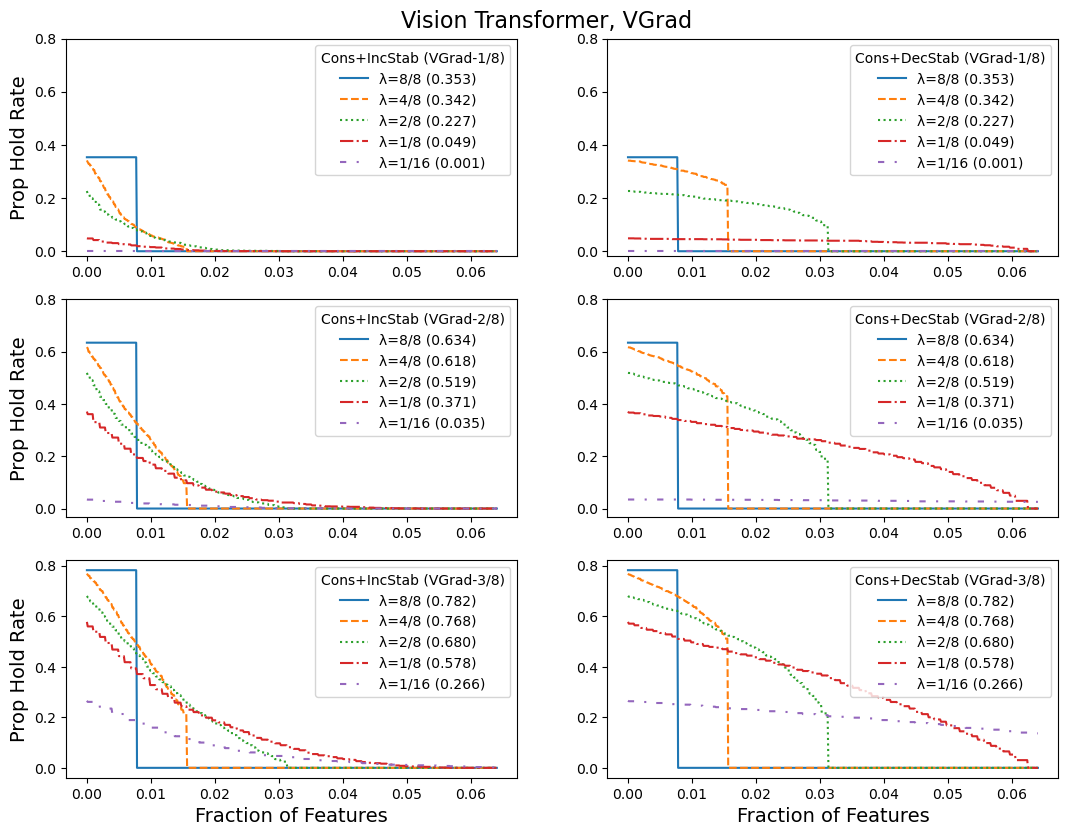}

\caption{
(Top) Vision Transformer with IGrad. (Bottom) Vision Transformer with VGrad.
(Left) consistent and incrementally stable. (Right) consistent and decrementally stable.
}
\label{fig:exp_q1_vit16_igrad_vgrad}
\end{figure}
\newpage
\begin{figure}[H]
\centering

\includegraphics[width=1.0\textwidth]{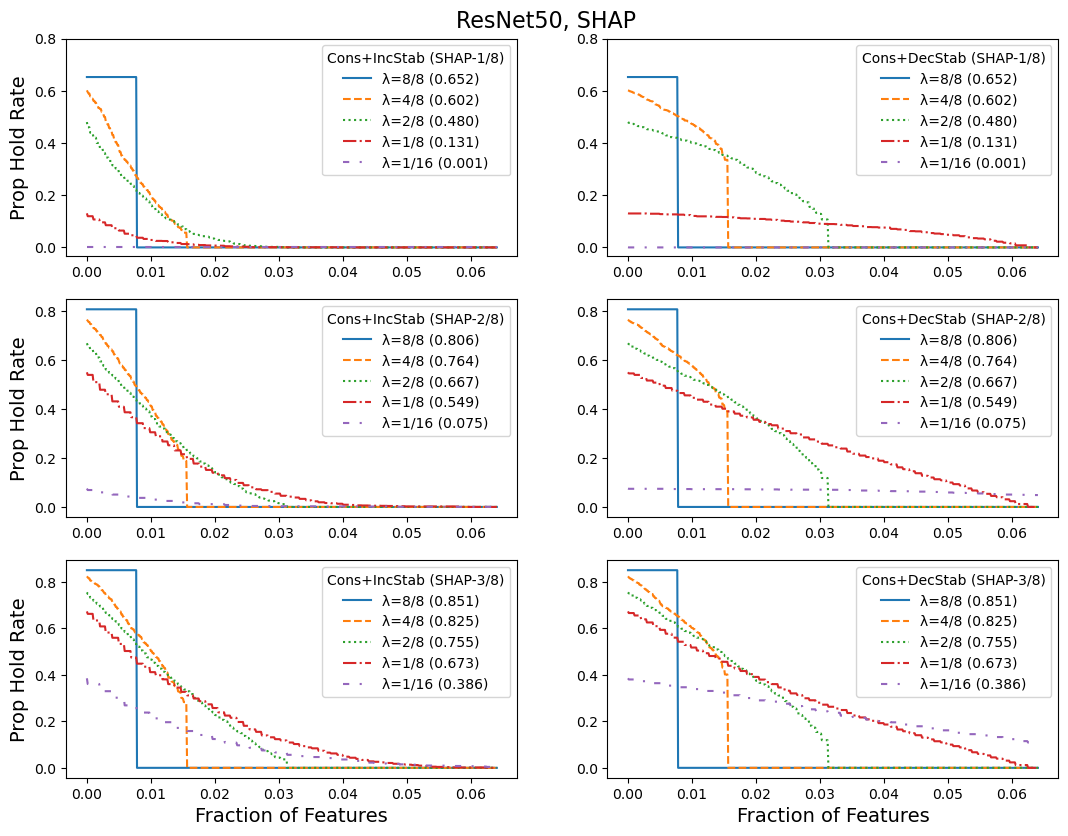}
\\

\includegraphics[width=1.0\textwidth]
{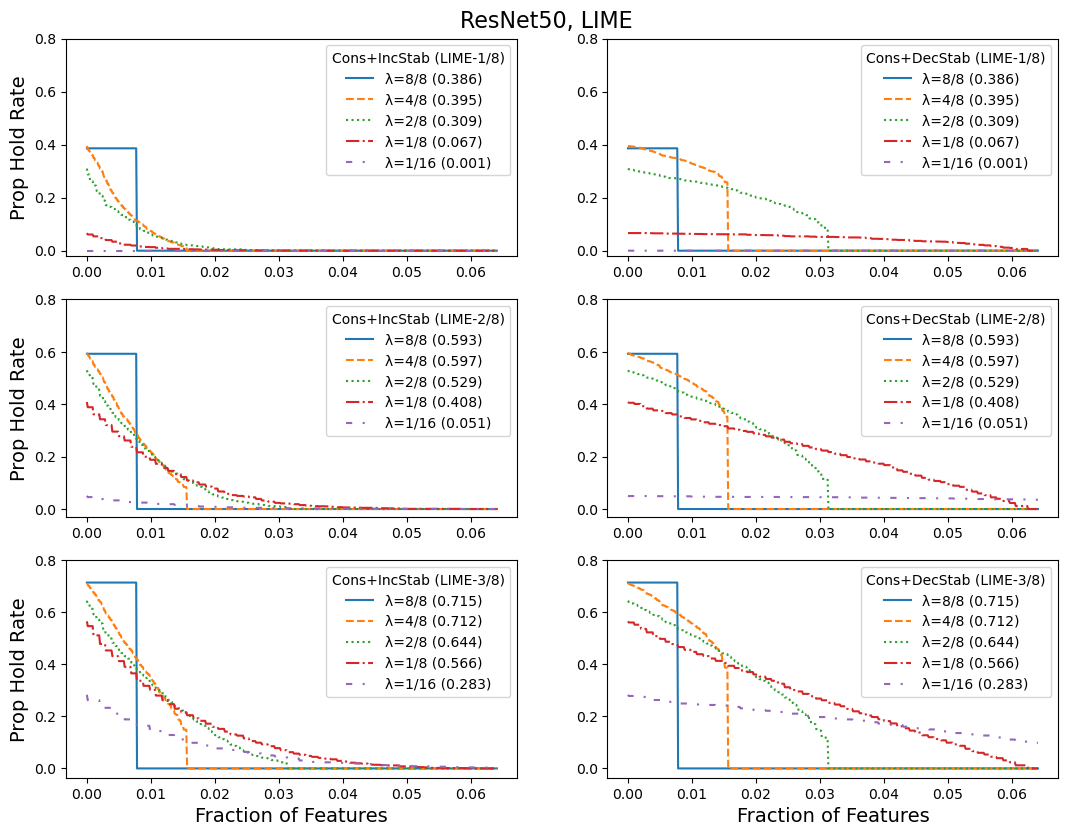}

\caption{
(Top) ResNet50 with SHAP. (Bottom) ResNet50 with LIME.
(Left) consistent and incrementally stable. (Right) consistent and decrementally stable.
}
\label{fig:exp_q1_resnet50_shap_lime}
\end{figure}
\newpage
\begin{figure}[H]
\centering

\includegraphics[width=1.0\textwidth]{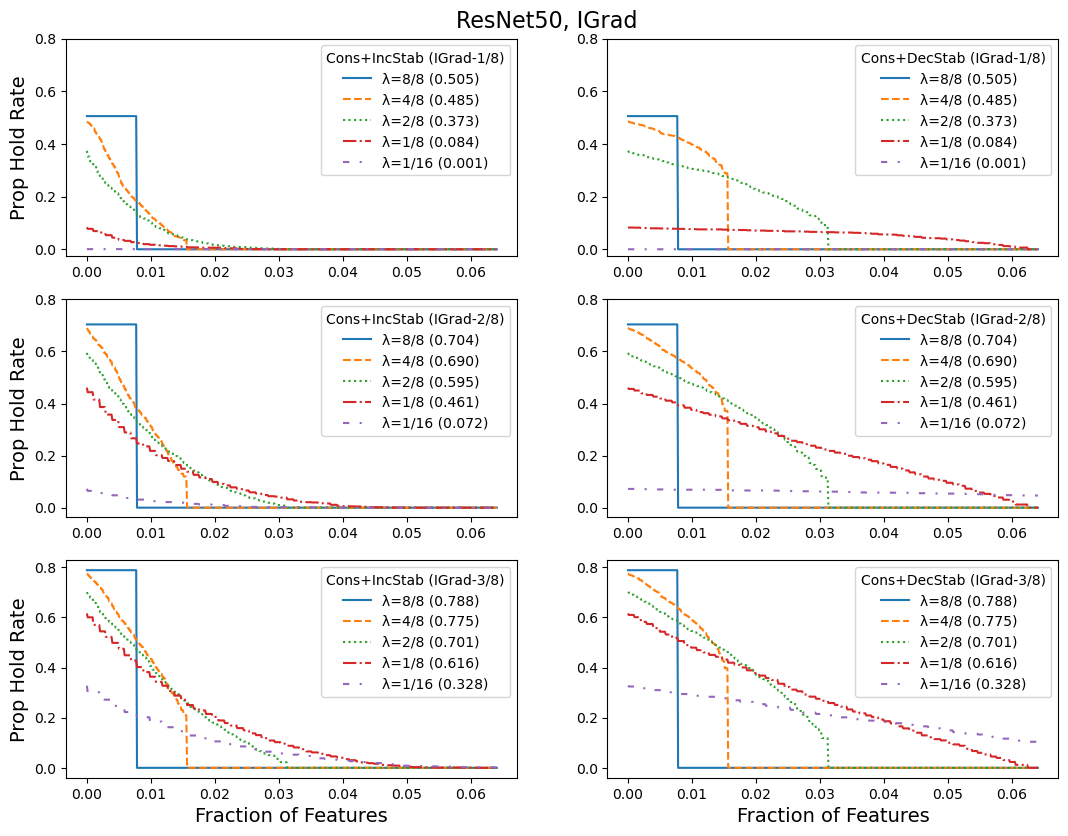}
\\

\includegraphics[width=1.0\textwidth]
{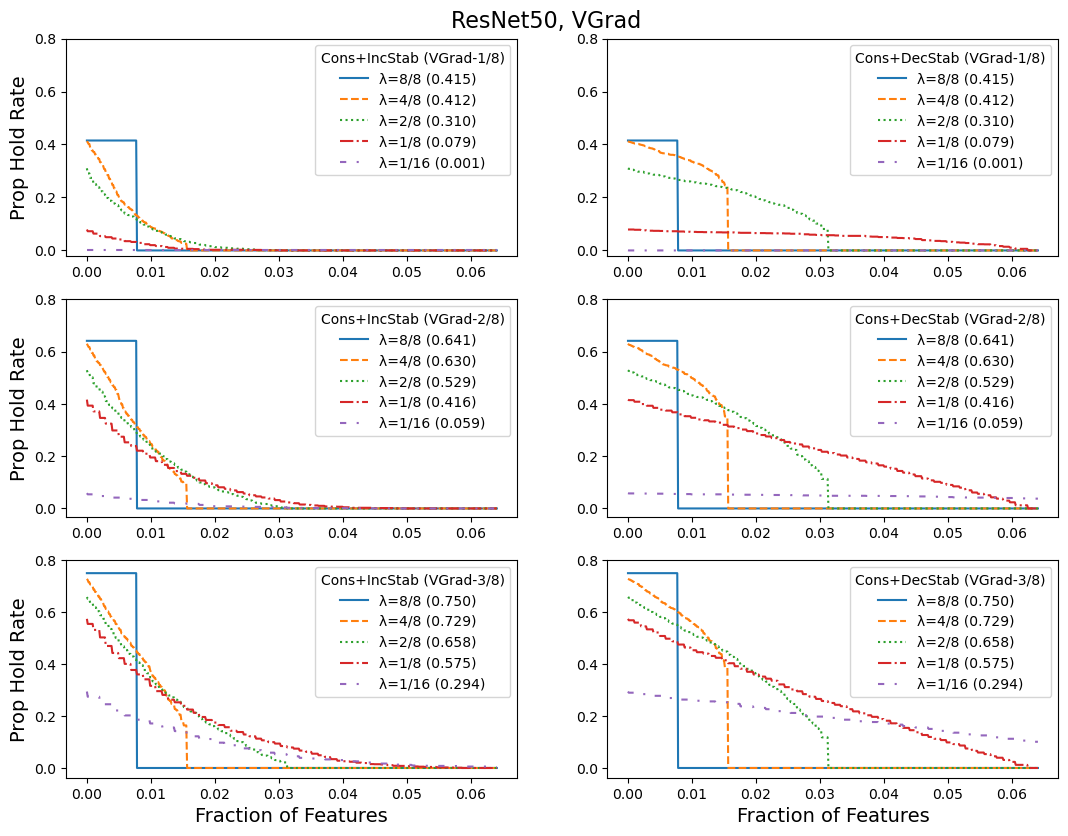}

\caption{
(Top) ResNet50 with IGrad. (Bottom) ResNet50 with VGrad.
(Left) consistent and incrementally stable. (Right) consistent and decrementally stable.
}
\label{fig:exp_q1_resnet50_vgrad_igrad}
\end{figure}
\newpage
\begin{figure}[H]
\centering

\includegraphics[width=1.0\textwidth]{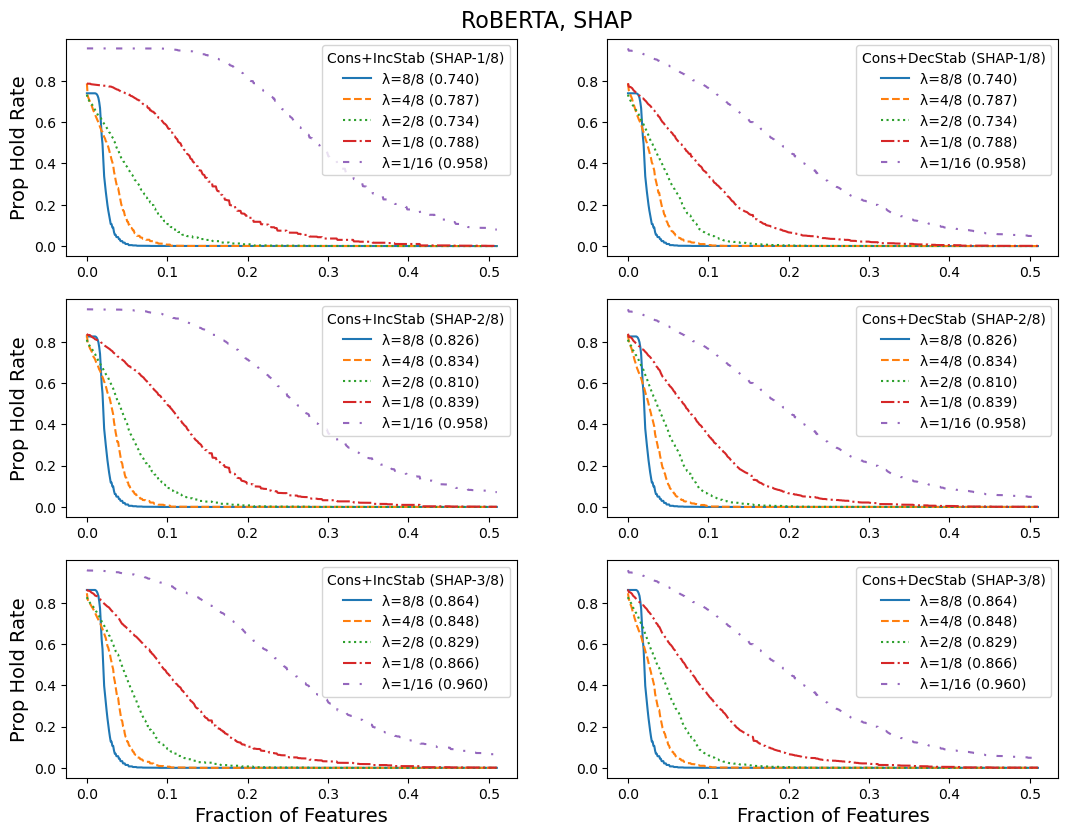}
\\

\includegraphics[width=1.0\textwidth]
{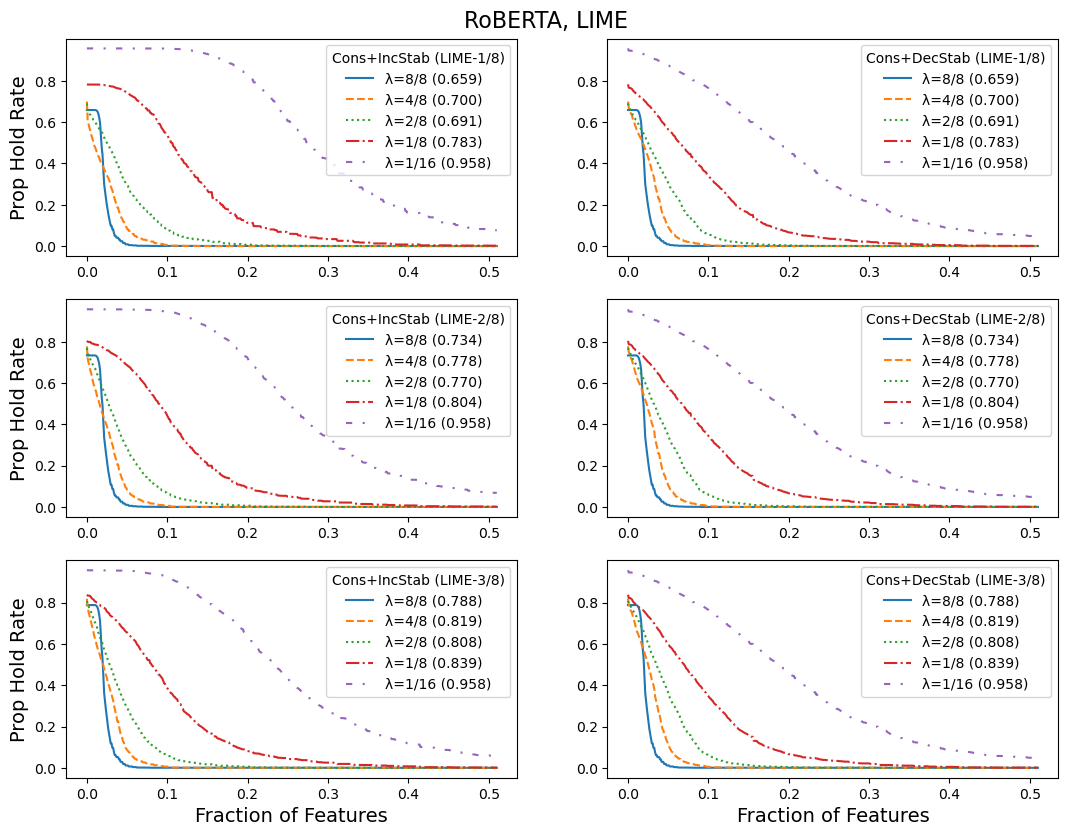}

\caption{
(Top) RoBERTa with SHAP. (Bottom) RoBERTa with LIME.
(Left) consistent and incrementally stable. (Right) consistent and decrementally stable.
}
\label{fig:exp_q1_roberta_shap_lime}
\end{figure}
\newpage
\begin{figure}[H]
\centering

\includegraphics[width=1.0\textwidth]{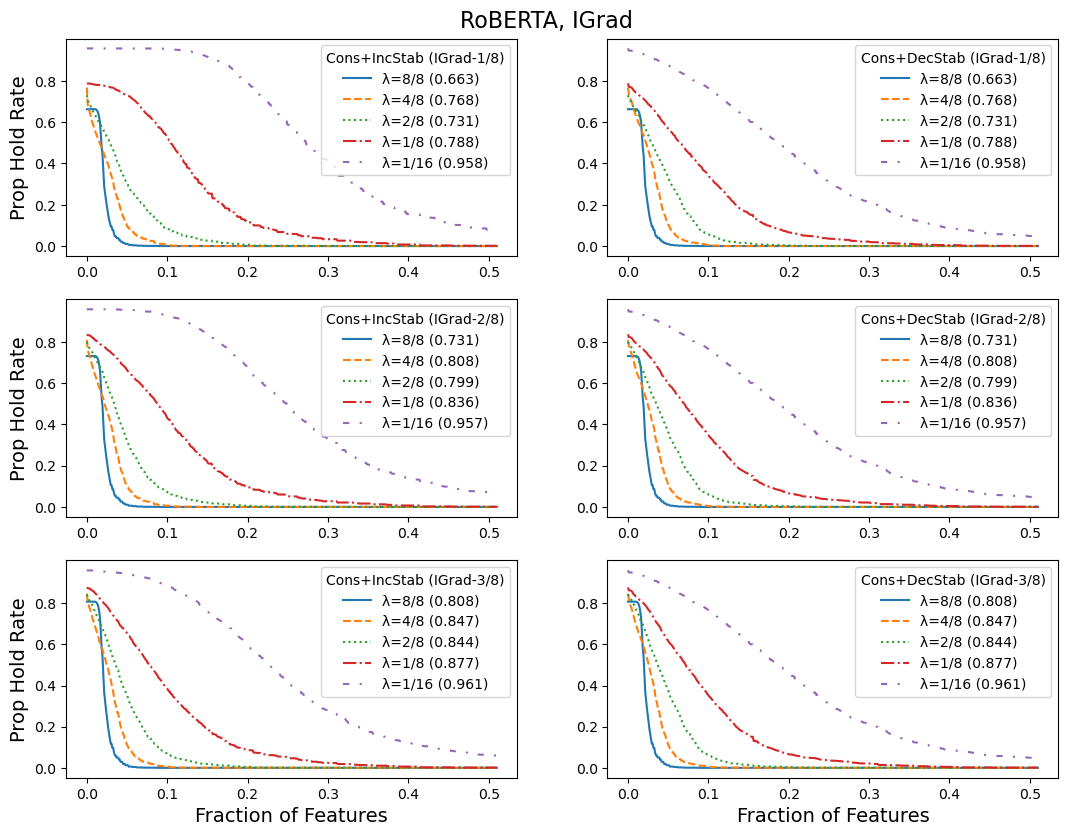}
\\

\includegraphics[width=1.0\textwidth]
{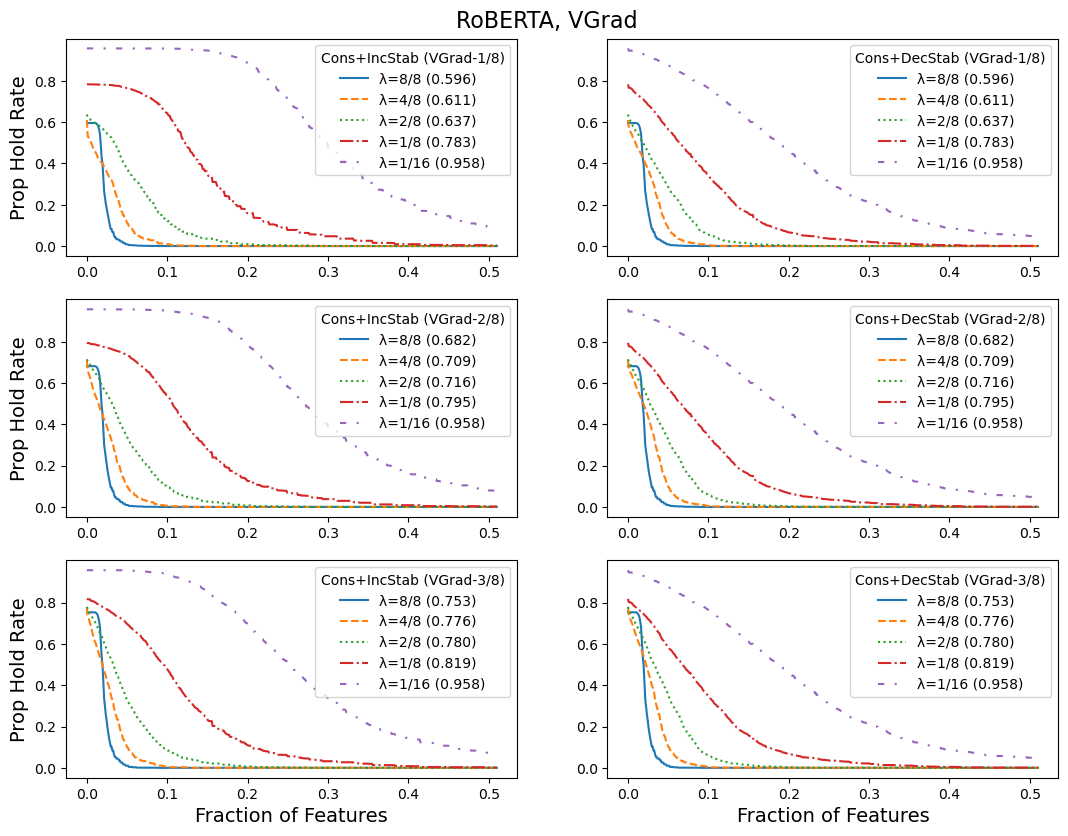}

\caption{
(Top) RoBERTa with IGrad. (Bottom) RoBERTa with VGrad.
(Left) consistent and incrementally stable. (Right) consistent and decrementally stable.
}
\label{fig:exp_q1_roberta_igrad_vgrad}
\end{figure}

\newpage
\subsection{Theoretical vs Empirical}
\label{sec:theoretical-vs-empirical}

We compare the certifiable theoretical stability guarantees with what is empirically attained via a standard box attack search~\cite{chen2017zoo}.
This is an extension of Section~\ref{sec:q2}, where we now show all models as evaluated with SHAP-top25\%.
We take \(q = 64\) with \(N_{\mrm{cert}} = 2000\) for the certified plots, and \(q = 64\) with \(N_{\mrm{emp}} = 250\) for the empirical plots.
This is because box attack is time-intensive.

\begin{figure}[H]
\centering

\includegraphics[width=0.92\textwidth]{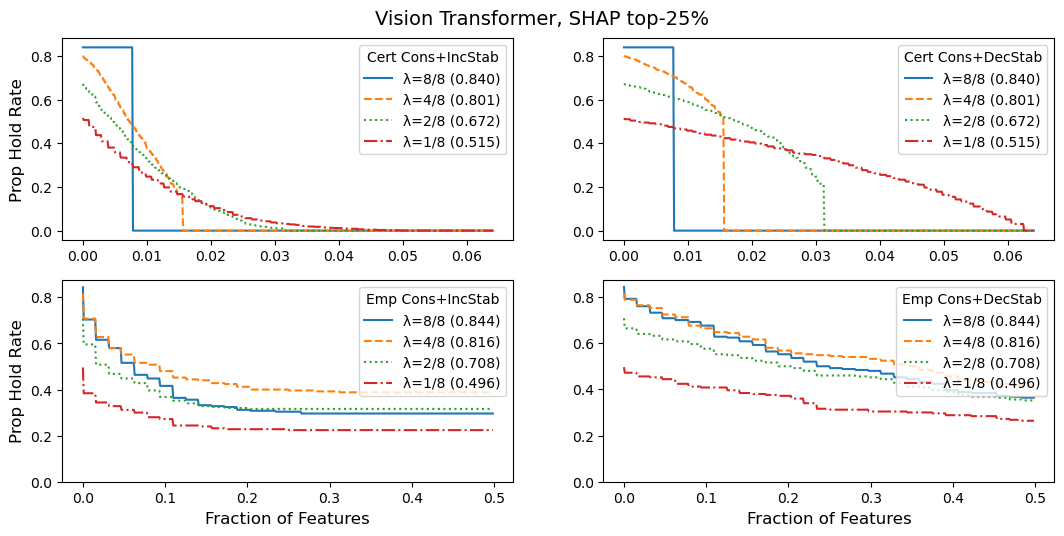}
\\

\includegraphics[width=0.92\textwidth]{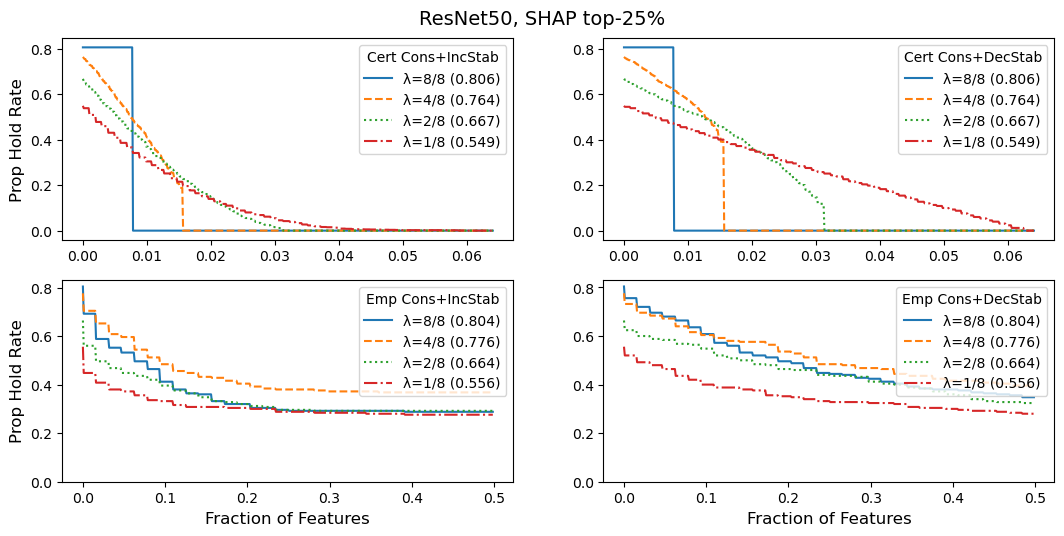}
\\

\includegraphics[width=0.92\textwidth]{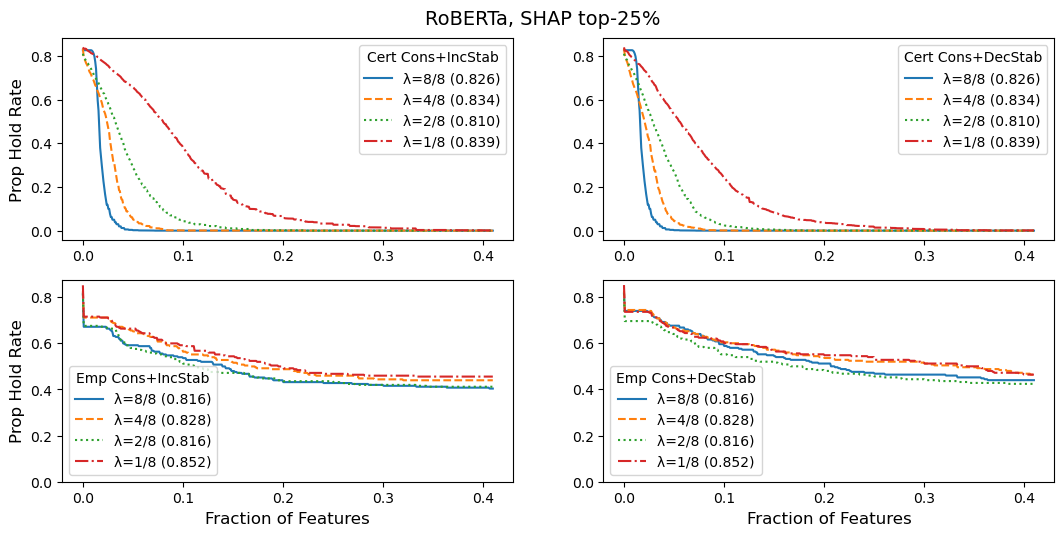}

\caption{
With SHAP top-25\%. (Top) Vision Transformer. (Middle) ResNet50. (Bottom) RoBERTa.
}
\label{fig:exp_q1_emps}
\end{figure}

\newpage
\subsection{Stability-Accuracy Trade-Offs}
\label{sec:stability-accuracy}

We study how the accuracy degrades with \(\lambda\).
We consider a smoothed model \(f\) constructed from a base classifier \(h \in \{\text{Vision Transformer}, \text{ResNet50}, \text{RoBERTa}\}\) and vary \(\lambda \in \{1/16, 1/8, 2/8, 4/8, 8/8\}\).
We then take \(N = 2000\) samples from each respective dataset and measure the accuracy of \(f\) at different radii.
We use \(f(x) \cong \texttt{true\_label}\) to mean that \(f\) attained the correct prediction at \(x \in \mcal{X}\), and we plot the following value at each radius \(r\):
\begin{align*}
    \mrm{value}(r)
    = \frac{\#\{
        x : f(x) \cong \texttt{true\_label} \enskip \text{and dec stable with radius \(\leq r\)}
    \}}{N}
\end{align*}
Below, we show the plots for different quantization parameters of \(q \in \{16, 32, 64, 128\}\).
The \(q = 64\) plots are identical to that of Figure~\ref{fig:q2}.
We see that increasing \(q\) generally improves the performance of MuS, but at a computational cost since this requires \(q\) evaluations of \(h\).

\begin{figure}[H]
\centering

\includegraphics[width=1.0\textwidth]{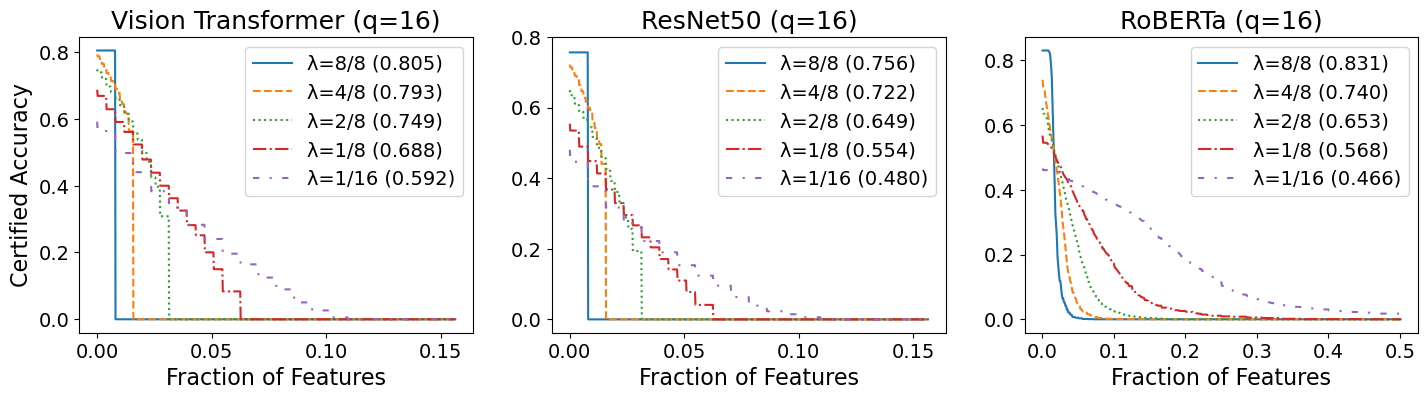}
\\

\includegraphics[width=1.0\textwidth]{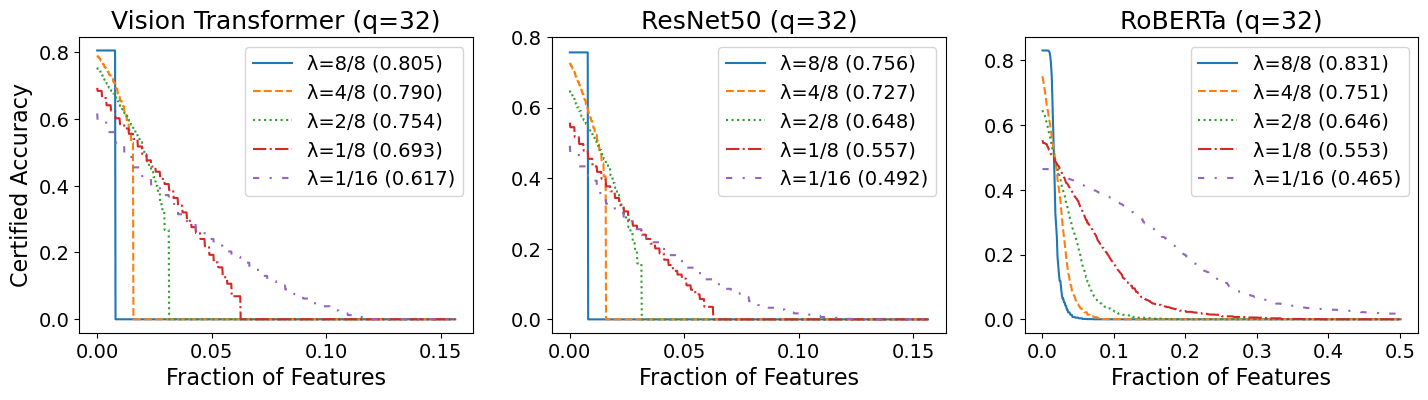}
\\

\includegraphics[width=1.0\textwidth]{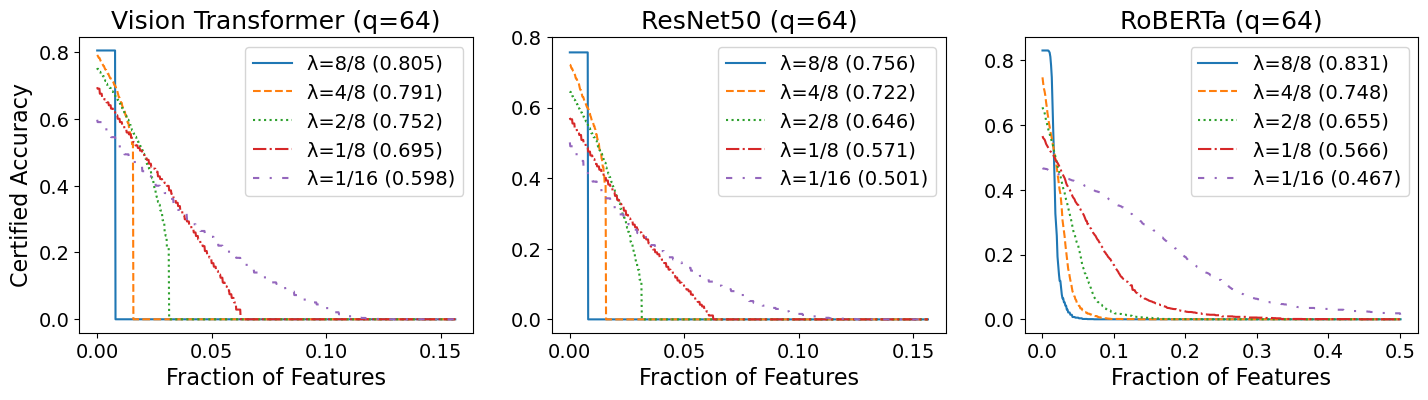}
\\

\includegraphics[width=1.0\textwidth]{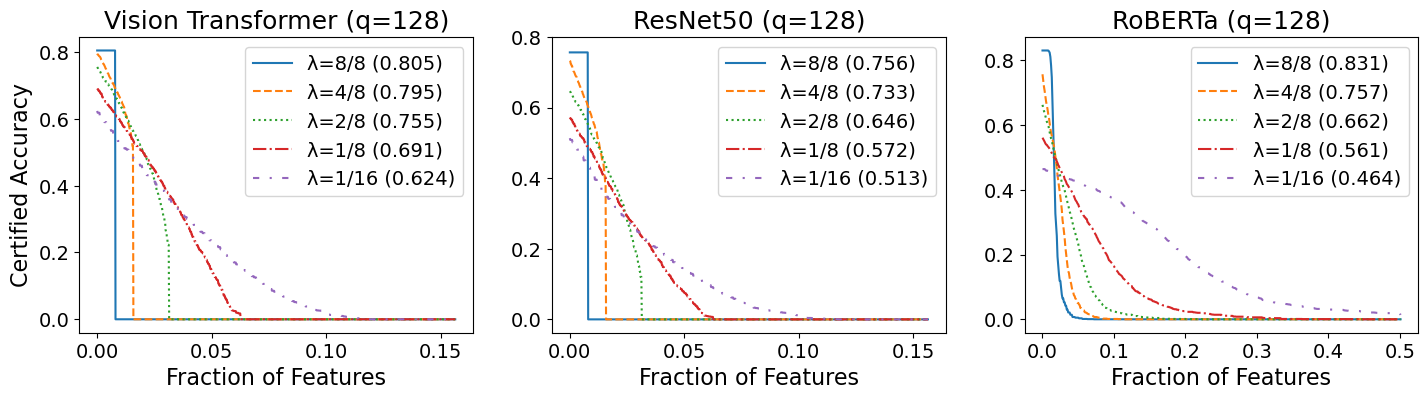}

\caption{
Certified accuracy plots for different quantization parameters of \(q \in \{16, 32, 64, 128\}\).
}
\label{fig:exp_q2_cert_accs}
\end{figure}

\newpage
\subsection{Which Explanation Method is the Best?}
\label{sec:which-method-best}

We first investigate how many features are needed to yield consistent and non-trivially stable explanations, as done by the greedy selection algorithm in Section~\ref{sec:adapting-methods}.
For some \(x \in \mcal{X}\), let \(k_x\) denote the fraction of features that \(\angles{f, \varphi}\) needs to be consistent, incrementally stable, and decrementally stable with radius \(1\).
We vary \(\lambda \in \{1/8, \ldots, 4/8\}\), where recall \(\lambda \leq 4/8\) is needed for non-trivial stability, and use \(N = 250\) samples to plot the average \(k_x\).
This part is identical to Section~\ref{sec:q3}.

\begin{figure}[H]
\centering

\includegraphics[width=1.0\textwidth]{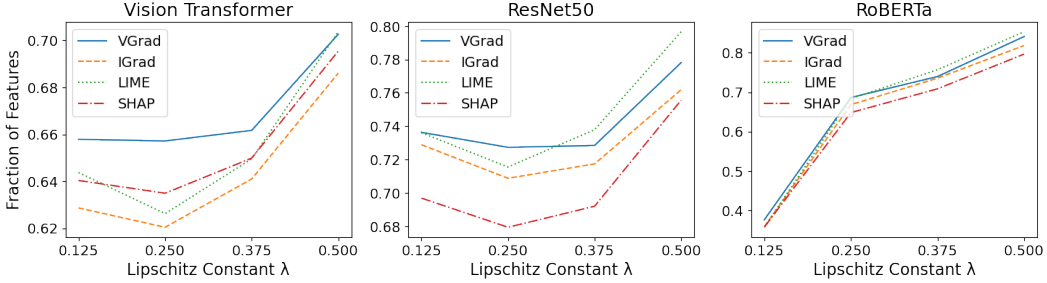}

\caption{
(Left) Vision Transformer. (Middle) ResNet50. (Right) RoBERTa.
}
\label{fig:exp_q3_appendix}
\end{figure}

We next investigate the ability of each method to predict features that lead to high accuracy.
Let \(f(x \odot \varphi(x)) \cong \texttt{true\_label}\), mean that the masked input \(x \odot \varphi(x)\) yields the correct prediction.
We then plot this accuracy as we vary the top-\(k \in \{1/8, 2/8, 3/8\}\) for different methods \(\varphi\), and \(\lambda \in \{1/8, \ldots, 8/8\}\), using \(N = 2000\) samples.

\begin{figure}[H]
\centering

\includegraphics[width=1.0\textwidth]{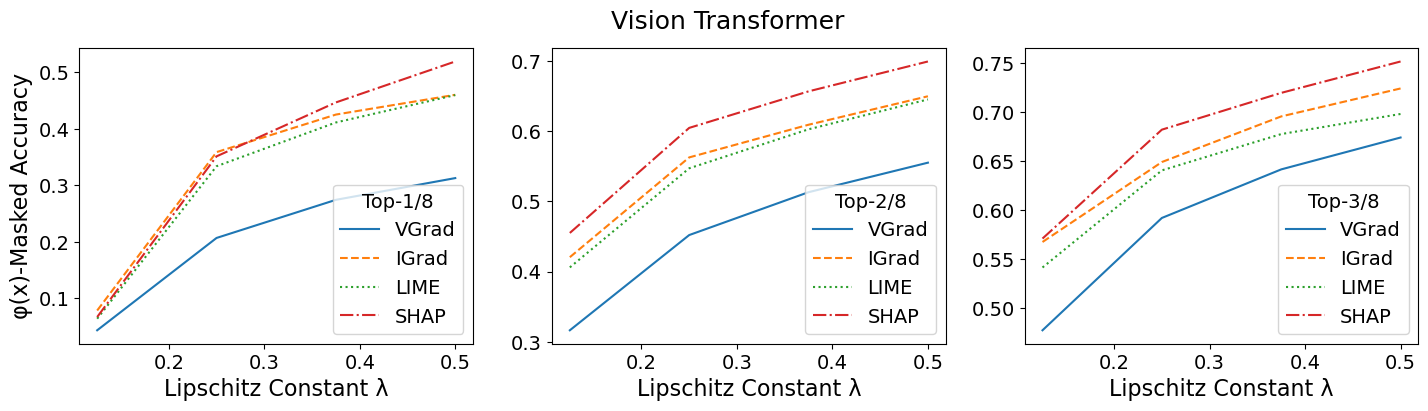}
\\

\includegraphics[width=1.0\textwidth]{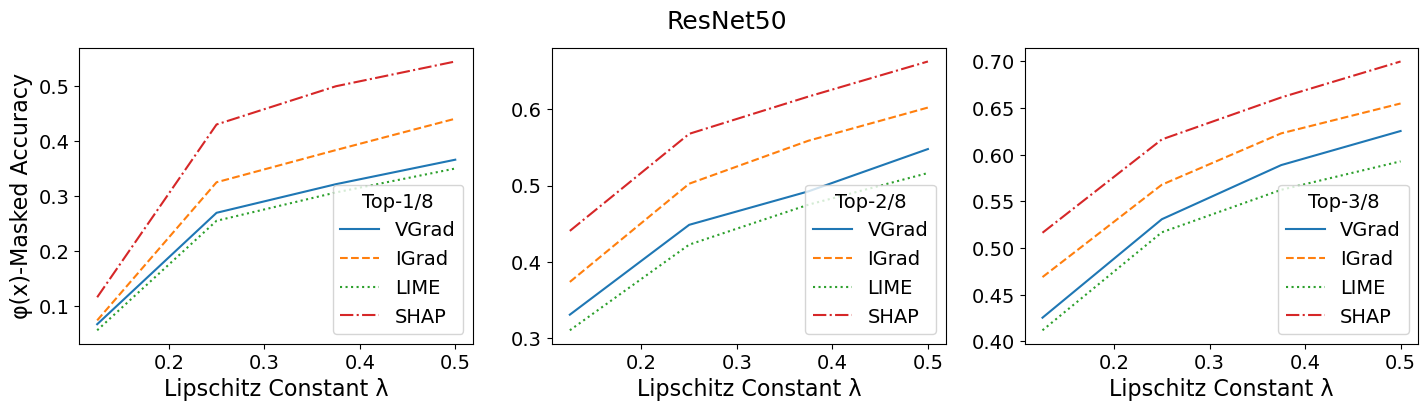}
\\

\includegraphics[width=1.0\textwidth]{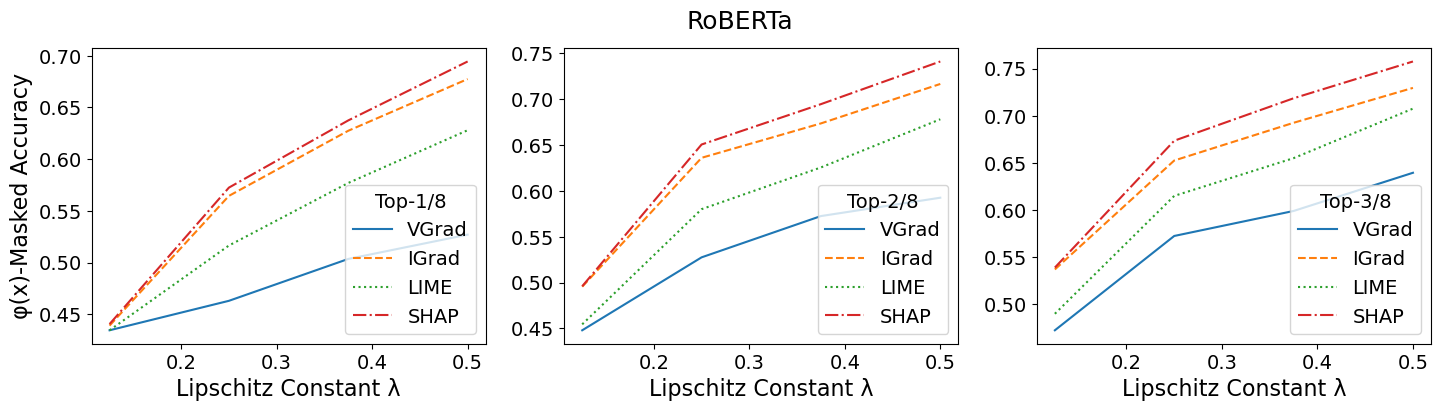}

\caption{
(Top) Vision Transformer. (Middle) ResNet50. (Bottom) RoBERTa.
}
\label{fig:exp_q3_methods_accs}
\end{figure}

\newpage
\subsection{Empirical Stability of Additive Smoothing}
\label{sec:q4}

We now study how well additive smoothing impacts empirical robustness compared to multiplicative smoothing.
Although additive smoothing cannot yield theoretical stability guarantees (c.f. Proposition~\ref{prop:additive-violates}), we nevertheless investigate its empirical performance.
We use explanations generated by SHAP top-25\% and use Vision Transformer as our base model \(h\).
We fine-tuned different variants of Vision Transformer at \(\lambda \in \{8/8, 4/8, 2/8, 1/8\}\) and used these in two general classes of models: Vision Transformer with multiplicative smoothing, and Vision Transformer with additive smoothing.
We call these two \(f_{\mrm{mus}}\) and \(f_{\mrm{add}}\) respectively, and define them as follows:
\begin{align*}
    f_{\mrm{mus}} (x) = \expval{s \sim \mcal{D}} h(x \odot s),
    \qquad
    f_{\mrm{add}} (x) = \expval{s \sim \mcal{U}(-1/2\lambda, 1/2\lambda)} h(x + s)
\end{align*}
where for multiplicative smoothing, \(\mcal{D}\) is as in Theorem~\ref{thm:mus}.
Over \(N = 250\) samples from ImageNet1K, we check how often incremental stability of radius \(\geq 1\) is obtained, and plot our results in Figure~\ref{fig:exp_q4q5} (left).
We see that multiplicative smoothing yields better empirical performance than additive smoothing.

\subsection{Empirical Stability of Adversarial Training}
\label{sec:q5}

Similar to Section~\ref{sec:q4}, we also check how adversarial training~\cite{kurakin2016adversarial} affects empirical stability.
We consider ResNet50 with different adversarial training setups from the Robustness Python library~\cite{robustness} and compare them to their respective MuS-wrapped variants.
As with Section~\ref{sec:q4} we take \(N = 250\) samples from ImageNet1K, and plot our results in Figure~\ref{fig:exp_q4q5} (right).

\begin{figure}[H]
\centering

\begin{minipage}{0.5\textwidth}
    \centering
    \includegraphics[width=1.0\textwidth]{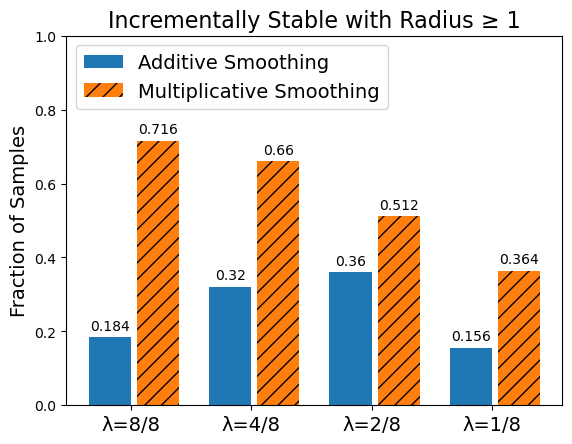}
\end{minipage}%
\begin{minipage}{0.5\textwidth}
    \centering
    \includegraphics[width=1.0\textwidth]{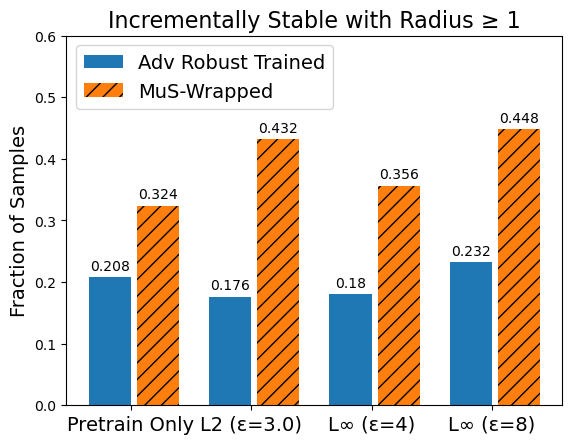}
\end{minipage}

\caption{
How often does each model empirically attain incremental stability of radius \(\geq 1\)?
We check every \(\alpha' \succeq \alpha\) where \(\norm{\alpha' - \alpha}_1 = 1\).
(Left) Additive smoothing vs. multiplicative smoothing.
(Right) ResNet50 with different adversarial training setups vs. their respective MuS-wrapped variants.
}
\label{fig:exp_q4q5}
\end{figure}

\newpage
\subsection{Discussion}

\paragraph{Effect of Smoothing}
We observe that smoothing can yield non-trivial stability guarantees, especially for Vision Transformer and RoBERTa, as evidenced in Appendix~\ref{sec:exps-quality-stability}.
We see that smoothing is least detrimental on these two transformer-based architectures, and most negatively impacts the performance of ResNet50.
We conjecture that although different training set-ups may improve performance across every category, this still serves to illustrate the general trend.

\paragraph{Theoretical vs Empirical}
It is expected that the certifiable radii of stability is more conservative than what is empirically observed.
As mentioned in Section~\ref{sec:certifying-stability}, for each \(\lambda\), there is a maximum radius to which stability can be guaranteed, which is an inherent limitation of using confidence gaps and Lipschitz constants as the main theoretical technique.
We emphasize that the notion of stability need not be tied to smoothing, though we are currently not aware of other viable approaches.

\paragraph{Why these Explanation Methods?}
We chose SHAP, LIME, IGrad, and VGrad from among the large variety of methods available primarily due to their popularity, and because we believe that they are collectively representative of many techniques.
In particular, we believe that LIME remains a representative baseline for surrogate model-based explanation methods.
SHAP and IGrad are, to our knowledge, the two most well-known families of axiomatic feature attribution methods.
Finally, we believe that VGrad is representative of a traditional gradient saliency-based approach.

\paragraph{Which Explanation Method is the Best?}
Based on our experiments in Appendix~\ref{sec:which-method-best}
we see that SHAP generally achieves higher accuracy using the same amount of top-\(k\) features as other methods.
On the other hand, VGrad tends to perform poorly.
We remark that there are well-known critiques against the usefulness of saliency-based explanation methods~\cite{kindermans2019reliability}.

\section{Miscellaneous}
\label{sec:miscellaneous}

\paragraph{Relevance to Other Explanation Methods}
Our key theoretical contribution of \mus{} in Theorem~\ref{thm:mus} is a general-purpose smoothing method that is distinct from standard smoothing techniques, namely additive smoothing.
Therefore, \mus{} is applicable to other problem domains beyond what is studied in this paper and would be useful where small Lipschitz constants with respect to maskings are desirable.

\paragraph{Broader Impacts}
Reliable explanations are necessary for making well-informed decisions and are increasingly important as machine learning models are integrated with fields like medicine, law, and business --- where the primary users may not be well-versed in the technical limitations of different methods.
Formal guarantees are thus important for ensuring the predictability and reliability of complex systems, which then allows users to construct accurate mental models of interaction and behavior.
In this work, we study a particular kind of guarantee known as stability, which is key to feature attribution-based explanation methods.

\end{document}